\newcommand{\ie}{\textit{i.e. }}
\def\vp{{\bm{p}}}
\def\vq{{\bm{q}}}
\def\vx{{\bm{x}}}
\def\vy{{\bm{y}}}
\def\gD{{\mathcal{D}}}
\def\gH{{\mathcal{H}}}
\def\gO{{\mathcal{O}}}
\def\gU{{\mathcal{U}}}
\def\gX{{\mathcal{X}}}
\def\sD{{\mathbb{D}}}
\def\sP{{\mathbb{P}}}
\def\sQ{{\mathbb{Q}}}
\def\sR{{\mathbb{R}}}
\def\sS{{\mathbb{S}}}
\def\sU{{\mathbb{U}}}
\def\sX{{\mathbb{X}}}
\def\mA{{\bm{A}}}
\title{Sublinear Sketches for Approximate Nearest Neighbor and Kernel Density Estimation}
\author{
Ved Danait\thanks{Department of Computer Science \& Engineering, Indian Institute of Technology Bombay, Mumbai, India.\\ Email: \textcolor{cyan}{veddanait@cse.iitb.ac.in}}
\quad
Srijan Das\thanks{Department of Computer Science \& Engineering, Indian Institute of Technology Bombay, Mumbai, India.\\ Email: \textcolor{cyan}{srijandas@cse.iitb.ac.in}}
\quad
Sujoy Bhore\thanks{Department of Computer Science \& Engineering and Centre for Machine Intelligence \& Data Science (CMINDS),\\ Indian Institute of Technology Bombay, Mumbai, India.\\ Email: \textcolor{cyan}{sujoy@cse.iitb.ac.in}}
}
\date{}
\begin{document}

\maketitle

\begin{abstract}
Approximate Nearest Neighbor (\textsf{ANN}) search and Approximate Kernel Density Estimation (\textsf{A-KDE}) are fundamental problems at the core of modern machine learning, with broad applications in data analysis, information systems, and large-scale decision making. In massive and dynamic data streams, a central challenge is to design compact \emph{sketches} that preserve essential structural properties of the data while enabling efficient queries.

In this work, we develop new \emph{sketching algorithms} that achieve \emph{sublinear} space and query time guarantees for both \textsf{ANN} and \textsf{A-KDE} for a dynamic stream of data. For \textsf{ANN} in the \emph{streaming} model, under natural assumptions, we design a \emph{sublinear sketch} that requires only $\mathcal{O}(n^{1+\rho-\eta})$ memory by storing only a sublinear ($n^{-\eta}$) fraction of the total inputs, where $\rho$ is a parameter of the \textsc{LSH} family, and $0<\eta<1$. Our method supports sublinear query time, batch queries, and extends to the more general \emph{Turnstile} model.
While earlier works have focused on Exact \textsf{NN}, this is the first result on \textsf{ANN} that achieves near-optimal trade-offs between memory size and approximation error. 

Next, for \textsf{A-KDE} in the \emph{Sliding-Window} model, we propose a sketch of size $\mathcal{O}\left(RW \cdot \frac{1}{\sqrt{1+\epsilon} - 1} \log^2 N\right)$, where $R$ is the number of sketch rows, 
$W$ is the \textsc{LSH} range, $N$ is the window size, and $\epsilon$ is the approximation error. This, to the best of our knowledge, is the first theoretical \emph{sublinear sketch} guarantee 
for \textsf{A-KDE} in the \emph{Sliding-Window} model.

We complement our theoretical results with experiments on various real-world datasets, which show that the proposed sketches are lightweight and achieve consistently low error in practice. 
\end{abstract}


\newpage
\tableofcontents
\newpage
\section{Introduction}
Modern machine learning and data analysis often require answering similarity and density queries over massive and evolving datasets. As data grows in scale and arrives in dynamic streams, it becomes infeasible to store or process everything explicitly, and the central challenge is to design compact sketches that preserve essential structure while supporting efficient queries. This challenge has been highlighted in seminal works on sublinear algorithms, streaming, and high-dimensional data processing \cite{alon1996space, indyk1998approximate, gionis1999similarity, muthukrishnan2005data, cormode2005improved}. Two central problems in this setting are Approximate Nearest Neighbor (\textsf{ANN}) search and Approximate Kernel Density Estimation (\textsf{A-KDE}).

\paragraph{Approximate Nearest Neighbor (\textsf{ANN}):} 
The Approximate Nearest Neighbor (\textsf{ANN}) problem formalizes similarity search in high-dimensional spaces. In its standard $(c,r)$-formulation, given a point set $\sP$ in a metric space $(\gX, \gD)$, the task is to preprocess $\sP$ so that, for any query $\vq \in \gX$, if the distance to its nearest neighbor in $\sP$ is at most $r$, then with probability at least $1 - \delta$ the algorithm returns a point within distance $cr$ of $\vq$. This formulation corresponds to the modern version of the $(r_1, r_2)$-PLEB problem introduced by \cite{indyk1998approximate}, with a relaxed variant due to \cite{har2012approximate} that permits null or arbitrary answers if no point lies within distance $r_1$. 
In~\cite{har2012approximate}, Har-Peled et al. presented fully dynamic solutions to \textsf{ANN}, though such methods require at least linear space and are not suited for the streaming model. In parallel, locality-sensitive hashing–based frameworks (e.g., \cite{panigrahy2005entropy, andoni2008near, andoni2014beyond, andoni2015optimal, andoni2017optimal, ahle2017optimal}) have driven much of the progress on optimizing space–time trade-offs in the primarily static and dynamic settings rather than streaming. 


\paragraph{Approximate Kernel Density Estimation (\textsf{A-KDE}):} 
Kernel density estimation (\textsf{KDE}) is a classical non-parametric method for estimating probability distributions from data \cite{davis2011remarks, parzen1962estimation,  silverman2018density, scott2015multivariate}. Given a sequence of i.i.d. random variables $x_1, \dots, x_n \in \mathbb{R}^d$, the density at a query $x$ is estimated as
$
\hat p(x;\sigma) = \frac{1}{n}\sum_{i=1}^n K_\sigma(x - x_i),$
where $K_\sigma(\cdot)$ is a kernel function with bandwidth $\sigma$. While highly effective, exact \textsf{KDE} becomes computationally prohibitive for large-scale or streaming datasets, which motivated the approximate version (namely, \textsf{A-KDE}), where the goal is to return, for any query $q$, a $(1 \pm \epsilon)$ multiplicative approximation to the true density with probability at least $1-\delta$. \textsf{A-KDE} thus provides a principled framework for large-scale density estimation, balancing accuracy and efficiency. Notable advances include \textsc{Repeated Array of Counts Estimator} (in short, \textsc{RACE}) \cite{coleman2020sub}, which compresses high-dimensional vectors into compact counters; \textsf{TAKDE} \cite{wang2023takde}, evaluated experimentally in the sliding-window setting; and \textsf{KDE-Track} \cite{qahtan2016kde}, designed for spatiotemporal streams.

\paragraph{Streaming Applications:} Consider a personalized news agent or financial assistant powered by large language models: vast streams of articles or market updates arrive dynamically, yet the system must provide timely, personalized insights without storing or processing the entire corpus. Approximate nearest neighbor (\textsf{ANN}) search enables real-time matching of a user’s evolving interests to relevant news items or market updates, while approximate kernel density estimation (\textsf{A-KDE}) captures shifts in topical or market distributions to adapt recommendations. A similar challenge arises in large-scale image and video platforms, where streams of photos or frames arrive continuously. \textsf{ANN} supports fast similarity search for recommendation, moderation, or retrieval in these large-scale systems, while \textsf{A-KDE} tracks distributional changes such as emerging styles, anomalies, or trending categories.

Such scenarios are not limited to text or vision: related needs appear in personalization, anomaly detection, and monitoring of high-volume data streams \cite{alon1996space, indyk1998approximate, muthukrishnan2005data, cormode2005improved, jegou2011product, coleman2020sub, qahtan2016kde, wang2023takde}. Across these domains, storing or processing all data explicitly is infeasible, making compact sketches essential for balancing efficiency and accuracy in large-scale retrieval problems.

\begin{center}
    \emph{A central question is how to efficiently perform approximate nearest neighbor (\textsf{ANN}) search and approximate kernel density estimation (\textsf{A-KDE}) on massive, dynamically evolving data streams.}
\end{center}

In this work, we use three most commonly used models of streaming, namely, insertion-only, turnstile, and sliding window.

\paragraph{Insertion-Only:} In the insertion-only streaming model, data arrives sequentially and can only be appended to the dataset; deletions are not allowed. This model captures many practical scenarios, such as log analysis, clickstreams, and sensor readings, where storing all data explicitly is infeasible. The goal is to maintain a compact summary that supports approximate queries using sublinear memory. Classical sketches such as Count-Min \cite{cormode2005improved}, AMS \cite{alon1996space}, and their extensions to similarity search and density estimation \cite{indyk1998approximate, coleman2020sub} have demonstrated the effectiveness of insertion-only algorithms for both high-dimensional similarity search and approximate kernel density estimation.


\paragraph{Turnstile:} The Turnstile model generalizes insertion-only streams by allowing both additions and deletions of data elements. This model is essential in settings where the dataset evolves dynamically or counts need to be adjusted, such as network traffic monitoring, dynamic graphs, and streaming recommendation updates. Maintaining sublinear sketches under Turnstile updates is more challenging, but prior work has shown that linear sketches, hash-based methods, and \textsc{RACE}-style counters can provide provable guarantees on query accuracy while supporting deletions efficiently \cite{indyk1998approximate, cormode2005improved, coleman2020sub}. For \textsf{ANN}, fully dynamic \textsc{LSH} and entropy-based methods have also been developed to handle updates in this model \cite{har2012approximate, andoni2017optimal}.


\paragraph{Sliding Window:} 
In many applications, only the most recent data is relevant. The sliding-window model maintains a succinct summary of the last $W$ updates, automatically expiring older elements. This model is particularly suitable for time-evolving datasets such as streaming video, sensor networks, or financial transactions, where queries should adapt to current trends. Exact computation of statistics like kernel density is often infeasible in this setting, motivating the development of approximate sketches. Techniques such as the \textsc{Exponential Histogram} (in short, \textsc{EH}) \cite{datar2002maintaining}, combined with \textsc{RACE} \cite{coleman2020sub} or other linear sketches, allow efficient approximation of both \textsf{ANN} and \textsf{KDE} over sliding windows \cite{wang2023takde, qahtan2016kde}. These methods balance memory efficiency, update speed, and approximation guarantees, making them practical for large-scale streaming environments.

\subsection{Problem Definitions}\label{sec:formal-def}

\begin{problem}[\emph{Streaming $(c,r)$-Approximate Near Neighbor (\textsf{ANN})}]\label{prob:ann}
Let $(\mathcal{X}, \mathcal{D})$ be a metric space, and let $\mathcal{P} \subseteq \mathcal{X}$ be a stream of at most $n$ points. Let $B(\mathbf{p}, r)$ denote a ball of radius $r$ around point $\mathbf{p}$. The goal is to maintain a data structure over the stream such that, for any query point $\mathbf{q} \in \mathcal{X}$:

\begin{itemize}
    \item If $\mathcal{D}_{\mathcal{P}}(\mathbf{q}) \le r$, then with probability at least $1 - \delta$ (for $0 < \delta \le 1$), the data structure returns some $\mathbf{p}' \in \mathcal{P} \cap B(\mathbf{q}, c r)$.
    \item The data structure stores only a sublinear fraction of the stream, i.e., $\mathcal{O}(n^{1-\eta})$ points (for $0 < \eta \le 1$), while supporting efficient updates and queries.
\end{itemize}

We refer to this task as the \emph{Streaming $(c,r)$-\textsf{ANN}} with failure probability $\delta$.
\end{problem}

\begin{problem}[\emph{Sliding-Window Approximate Kernel Density Estimation (\textsf{A-KDE})}]\label{prob:kde}
Let $\{\mathbf{x}_t\}_{t\ge 1}$ be a data stream, where each $\mathbf{x}_t$ is drawn from a (potentially time-varying) density $p_t(\mathbf{x})$. Let $N$ denote the window size, and let $\mathcal{T}_t = \{t-N+1, \dots, t\}$ denote the indices of points in the current window.

Given a query $\mathbf{x}$, the sliding-window \textsf{KDE} at time $t$ is defined as:
\[
\hat h(\mathbf{x}; \sigma_t) = \frac{1}{N} \sum_{j \in \mathcal{T}_t} K_{\sigma_t}(\mathbf{x} - \mathbf{x}_j),
\]
where $K_{\sigma_t}$ is a kernel with bandwidth $\sigma_t$.

The goal is to maintain a compact sketch that supports \emph{\textsf{A-KDE}} over the sliding window, enabling efficient updates and queries.
\end{problem}

\subsection{Our Contributions}

This work makes progress on two central problems in large-scale data analysis: streaming Approximate Nearest Neighbor (\textsf{ANN}) search (Problem~\ref{prob:ann}) and Approximate Kernel Density Estimation (\textsf{A-KDE}) (Problem~\ref{prob:kde}). We design new sketching algorithms that provably achieve sublinear space while supporting efficient queries, and we validate their effectiveness through extensive experiments. Below, we highlight our key contributions.

\subsubsection{Our contributions to \textsf{ANN} (Problem~\ref{prob:ann})}  At first glance, maintaining even approximate solutions for \textsf{ANN} in the streaming model with \emph{sublinear sketches} appears rather hopeless: an adversary can force any algorithm to store nearly all the data by giving inputs from scaled multidimensional lattices. However, real-world data is far from adversarial and often follows natural distributional assumptions \cite{mou2017refined, coleman2019sub}. Leveraging this, we prove that under a Poisson point process model—a well-studied and practically relevant distribution—\textsf{ANN} in the streaming setting \emph{does} admit efficient sketching.

Our approach revisits the classical Motwani–Indyk framework \cite{indyk1998approximate}. We show that, under Poisson distributed inputs, it suffices to retain only a sublinear fraction of the stream, namely $\gO(n^{1-\eta})$ points obtained by uniform sampling. This leads to a simple yet powerful sketching scheme with the following guarantees:
\begin{enumerate}
    \item \textbf{Streaming \textsf{ANN} with sublinear space.} Our sketch provides $(c,r)$-\textsf{ANN} guarantees while storing only a vanishing fraction of the input. 
    \item \noindent \textbf{Turnstile robustness.} We extend our guarantees to the Turnstile model, assuming only mild restrictions on adversarial deletions within any unit ball.  
    \item \noindent \textbf{Parallel batch queries.} Our scheme naturally supports batch queries, which can be executed in parallel to achieve significant speedups. 
\end{enumerate}

To our knowledge, this is the first work to obtain such guarantees for \textsf{ANN} in the streaming model under realistic assumptions. Importantly, the simplicity of our scheme makes it broadly applicable and easy to generalize.

\medskip\textbf{Empirical validation.} We complement our theoretical analysis with experiments on real-world datasets --- \texttt{sift1m} and \texttt{fashion-mnist}, as well as a custom synthetic dataset, \texttt{syn-32}, designed to emulate a Poisson point process. The results demonstrate that our sketches are lightweight, achieve consistently low error, and provide \emph{truly sublinear} space usage in high-$\epsilon$ regimes without compromising accuracy. In particular:  
\begin{enumerate}
    \item We show that for $\epsilon = 0.5$, we obtain sublinear sketches for all $\eta \geq 0.5$. More generally, for every sufficiently large $\epsilon$, there exists a threshold $\eta^*$ such that for all $\eta > \eta^*$, our scheme guarantees sublinear sketches without compromising on performance.  
    \item Our method outperforms the Johnson–Lindenstrauss (JL) baseline: beyond $\epsilon \approx 0.7$–$0.8$ on \texttt{sift1m}, and beyond $\epsilon \approx 0.9$ on \texttt{fashion-mnist}.  
    \item We demonstrate that, under fixed workloads, our method achieves higher query throughput than the \textsf{JL} baseline across diverse datasets and parameter settings.
\end{enumerate}

\subsubsection{Our contributions to \textsf{A-KDE} (Problem~\ref{prob:kde})}

The \textsc{RACE} algorithm of \cite{coleman2020sub} provides an elegant sketch for \textsf{KDE} in dynamic data streams and naturally supports the Turnstile model, thanks to its ability to handle both insertions and deletions. However, \textsc{RACE} lacks the mechanism to manage temporal information explicitly, making it unsuitable for the \emph{sliding-window} model where data must expire once it falls outside the most recent $N$ updates. 

To address this challenge, we incorporate the classical \textsc{EH}\footnote{An exponential histogram can maintain aggregates over data streams with respect to the last $N$ data elements. For example, it can estimate up to a certain error the number of 1's seen within the last $N$ elements, assuming data is in the form of 0s and 1s.} result \cite{datar2002maintaining} into each \textsc{RACE} cell. \textsc{EH} is a powerful tool for maintaining aggregates over the most recent $N$ updates with provable accuracy guarantees, and here they enable us to count, with bounded error, how many elements in the active window hash to the same \textsc{LSH} bucket as the query $\vq$. This delicate combination allows us to design the \emph{first sketch} for the \textbf{\textsf{A-KDE} in the sliding-window model}, which explicitly handles expiration of old data while retaining the efficiency of \textsc{RACE}. Our construction does incur an extra $\log^2 N$ factor in space compared to plain \textsc{RACE}, but it uniquely enables sliding-window guarantees. We further extend this approach to handle \emph{batch updates}, where data arrives in mini-batches; here, the window consists of $N$ batches, and the \textsc{EH} is naturally adapted to this setting.  

\medskip\textbf{Empirical validation.} 
We evaluate our sliding-window \textsf{A-KDE} sketch on both synthetic and real-world datasets. We have used \texttt{News Headlines} and \texttt{ROSIS Hyperspectral Images} as real datasets. For synthetic data, we have conducted \emph{Monte Carlo} simulations using 50 different sets of $10000$ data points of dimension 200 sampled from $10$ different multivariate Gaussian distributions. We use a window size parameter to define the sliding window size. Our algorithm can also support batch updates. The experimental results highlight that:  
\begin{enumerate}
    \item The empirical relative error of \textsf{KDE} estimates is significantly smaller than the worst-case theoretical bound, even with a small number of rows in the sketch. This is observed for both synthetic and real-world datasets.
    \item Our sliding-window \textsf{A-KDE} achieves accuracy comparable to \textsc{RACE} \cite{coleman2020sub} on \texttt{News Headlines} and \texttt{ROSIS Hyperspectral Images} as well as the synthetic dataset.  
\end{enumerate}


\section{Preliminaries}
\subsection{Locality Sensitive Hashing} 

Let $\sX$ be a metric space and $D:\sX\times \sX\rightarrow\sR$ be a distance metric.
\begin{definition} A family $\mathcal{H} = \{ h: \sX \to \sU \}$ is \textit{$(r_1, r_2, p_1, p_2)$-sensitive} for $(\sX, D)$ if for any $\vp, \vq \in \sX$, we have:
\begin{itemize}
    \item If $D(\vp, \vq) \leq r_1$, then $P_{h \in \mathcal{H}}[h(\vq) = h(\vp)] \geq p_1$.
    \item If $D(\vp, \vq) \geq r_2$, then $P_{h \in \mathcal{H}}[h(\vq) = h(\vp)] \leq p_2$.
\end{itemize}
\end{definition}

For a locality-sensitive family to be useful, it must satisfy the inequalities $\vp_1 > \vp_2$ and $r_1 < r_2$.
Two points $\vp$ and $\vq$ are said to collide, if $h(\vp)=h(\vq)$. We denote the collision probability by $k(\vx,\vy)$. Note that $k(.,.)$ is bounded and symmetric \ie $0\leq k(\vx,\vy)\leq 1,k(\vx,\vy)=k(\vy,\vx),$ and $k(\vx,\vx)=1$. It is known that if there exists a hash function $h(\vx)$ with $k(\vx,\vy)$ and range $[1,W]$, the same hash function can be independently concatenated $p$ times to obtain a new hash function $H(.)$ with collision probability $k^p(\vx,\vy)$ for any positive integer $p$. The range of the new hash function will be $[1,W^p]$.  
In particular, we use two such hash families in our analysis: (1) \textsc{SRP-LSH} (also known as, Angular \textsc{LSH}), described in \cite{charikar2002similarity} and (2) $p$-stable \textsc{LSH}, described in \cite{datar2004locality}.

\subsection{Approximate Nearest  Neighbor} 

In \cite{har2012approximate}, the authors give a scheme to solve the (c,r)-\textsf{ANN} problem with the following guarantees: 
\begin{theorem}[\cite{har2012approximate}]
    Suppose there is an $(r, cr, p_1, p_2)$-sensitive family $\mathcal{H}$ for $(\gX, \gD)$, where $p_1, p_2 \in (0,1)$ and let $\rho = \frac{\log(\frac{1}{p_1})}{\log(\frac{1}{p_2})}$. Then there exists a fully dynamic data structure for the $(c, r)$-Approximate Near Neighbor Problem over a set $\sP \subset \gX$ of at most $n$ points, such that:
    \begin{itemize}
        \item Each query requires at most $\gO\!\left(\frac{n^{\rho}}{p_{1}}\right)$
    distance computations and $\gO\!\left(\tfrac{n^{\rho}}{p_{1}} \cdot \log_{\frac{1}{p_{2}}} n\right)$
    evaluations of hash functions from $\mathcal{H}$. The same bounds hold for updates.  
        \item The data structure uses at most $\gO(\frac{n^{1+\rho}} {p_1})$ words of space, in addition to the space required to store $\sP$.
    \end{itemize}
    The failure probability of the data structure is at most $\frac{1}{3} + \frac{1}{e} < 1$.
\end{theorem}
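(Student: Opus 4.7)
The plan is to use the classical Indyk--Motwani LSH construction, adapted to the fully-dynamic setting. The core data structure consists of $L$ independent hash tables, where the hash function for each table is formed by independently concatenating $k$ functions drawn from the $(r, cr, p_1, p_2)$-sensitive family $\mathcal{H}$. Concretely, for each $i \in [L]$, I would sample $h_{i,1}, \dots, h_{i,k}$ i.i.d.\ from $\mathcal{H}$ and set $g_i(\vx) = (h_{i,1}(\vx), \dots, h_{i,k}(\vx))$. Each table stores pointers to the points of $\sP$, bucketed by their $g_i$-value. On a query $\vq$, the algorithm examines all points that collide with $\vq$ under at least one $g_i$, computing distances until either an element of $B(\vq, cr)$ is found or a carefully chosen budget is exceeded.

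The heart of the proof is choosing $k$ and $L$ to balance two failure modes. First I would set $k = \lceil \log_{1/p_2} n \rceil$, so that any point $\vp$ with $\gD(\vp,\vq) \geq cr$ collides with $\vq$ under a fixed $g_i$ with probability at most $p_2^k \leq 1/n$. Summed over all $n$ points and all $L$ tables, the expected number of such ``far'' collisions is at most $L$, so by Markov the total is at most $3L$ with probability at least $2/3$. Second, for any point $\vp^*$ with $\gD(\vp^*, \vq) \leq r$, the collision probability under a single $g_i$ is at least $p_1^k \geq p_1 \cdot n^{-\rho}$ (using the definition $\rho = \log(1/p_1)/\log(1/p_2)$). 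Setting $L = \lceil n^\rho/p_1 \rceil$ and bounding $(1 - p_1 n^{-\rho})^L \leq e^{-1}$, the near point is missed by every table with probability at most $1/e$. A union bound then gives the claimed failure probability $1/3 + 1/e < 1$. The query simply halts after inspecting $3L$ candidate points, and the reported candidate (if any within $B(\vq, cr)$) is returned; the $O(n^\rho/p_1)$ distance computations and the $O(L \cdot k) = O\!\left(\tfrac{n^\rho}{p_1}\log_{1/p_2} n\right)$ hash evaluations follow immediately from the choices of $L$ and $k$.

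For space, storing $L$ tables each holding $n$ pointers (indexed by the buckets) costs $O(nL) = O(n^{1+\rho}/p_1)$ words on top of the point set itself. For the fully-dynamic guarantee, insertions and deletions are handled by updating each of the $L$ buckets corresponding to $g_i(\vp)$; since each update touches $O(L)$ buckets and requires $O(Lk)$ hash evaluations, it enjoys the same asymptotic bound as a query. Standard hashing (e.g.\ dynamic perfect hashing or open addressing) gives expected $O(1)$ cost per bucket access, which leaves the bounds unchanged up to constants.

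The main technical point is the delicate balancing of parameters: $k$ must be large enough that the expected number of far collisions is $O(L)$, yet small enough that the survival probability $p_1^k$ of a true near point remains at least $\Omega(n^{-\rho}/p_1)$ after setting $L$. This is precisely why $\rho = \log(1/p_1)/\log(1/p_2)$ emerges as the governing exponent, and it is the step where one must be careful with ceilings to avoid losing extra factors of $p_1$ in the probability of hitting the near point. Everything else (dynamic maintenance, Markov's bound on far collisions, and the union bound) is routine once these parameters are fixed.
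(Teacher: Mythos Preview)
Your proposal is correct and follows essentially the same approach as the paper: the paper does not give a full proof of this cited result but describes exactly the construction you outline (concatenating $k$ hashes to form each $g_i$, building $L$ independent tables, and using standard hashing for non-empty buckets), and the parameter choices $k = \lceil \log_{1/p_2} n \rceil$ and $L = n^\rho/p_1$ together with the Markov/union-bound analysis you give are precisely those used in the paper's Lemmas~\ref{e2lemma} and~\ref{e1lemma} when it adapts this argument to the streaming setting.
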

Given an $(r, cr, p_1, p_2)$-sensitive family $\mathcal{H}$ of hash functions, the authors amplify the gap between the ``high'' probability $p_1$ and ``low'' probability $p_2$ by concatenating several functions. For a specified parameter $k$, they define a function family $\mathcal{G} = \{ g: \gX \to \gU^k \}$ such that
\[
g(\vp) = (h_{i_1}(\vp), h_{i_2}(\vp), \dots, h_{i_k}(\vp))
\]
where $h_i \in \mathcal{H}$ and $I = \{ i_1, \dots, i_k \} \subset \{1, \dots, |\mathcal{H}|\}$. For an integer $L$, they choose $L$ functions $g_1, \dots, g_L$ from $\mathcal{G}$ independently and uniformly at random. During preprocessing, they store a pointer to each $\vp \in \sP$ in the buckets $g_1(\vp), \dots, g_L(\vp)$. Since the total number of buckets may be large, they retain only the non-empty buckets by resorting to ``standard'' hashing.

\subsection{Repeated Array-of-Counts Estimator (\textsc{RACE})}

In \cite{coleman2020sub}, the authors propose \textsc{RACE}, an efficient sketching algorithm for kernel density estimation on high-dimensional streaming data. The \textsc{RACE} algorithm compresses a dataset $\sD$  into a 2-dimensional array $\mA$ of integer counters of size $L\times R$ where each row is an \textsc{Arrays of (locality-sensitive) Counts Estimator}(\textsc{ACE}) data structure\footnote{\textsc{ACE} is a swift and memory efficient algorithm used for unsupervised anomaly detection which does not require to store even a single data sample and can be dynamically updated.}\cite{luo2018arrays}. To add an element $\vx\in\sD$ we compute the hash of $\vx$ using $L$ independent \textsc{LSH} functions $h_1(\vx),h_2(\vx),...,h_L(\vx)$. Then we increment the counters at $\mA[i,h_i(\vx)]$ for all $i\in[1,...,L]$. So each array cell stores the number of data elements that have been hashed to the corresponding \textsc{LSH} bucket. 

The \textsc{KDE} of a query is roughly a measure of the number of nearby elements in the dataset. Hence, it can be estimated by averaging over hash values for all rows of \textsc{RACE}:
\[\hat{K}(\vq)=\frac{1}{L}\sum_{i=1}^L\mA[i,h_i(\vq)]\] 
For a query $\vq$, the \textsc{RACE} sketch computes the \textsc{KDE} using the median of means procedure rather than the average to bound the failure probability of the randomized query algorithm. The key result of \cite{luo2018arrays} is:
\begin{theorem}[\textsc{ACE} estimator]
    Given a dataset $\sD$ and an \textsc{LSH} family $\gH$ with finite range $[1, W]$, construct a hash function $h: x\rightarrow [1,W^p]$ by concatenating $p$ independent hashes from $\gH$. Suppose an \textsc{ACE} array $\mA$ is created by hashing each element of $\sD$ using $h(.)$. Then for any query $\vq$,
\[
\mathbb{E}[\mA[h(\vq)]] = \sum_{\vx \in \sD} k^p(\vx, \vq)
\]
\end{theorem}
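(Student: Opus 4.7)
The plan is to unpack the definition of the \textsc{ACE} cell as a sum of collision indicators, then exchange expectation with summation and invoke the independence of the $p$ concatenated hashes to get the $p$-th power of the base collision probability.

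First I would write the count stored in the queried cell explicitly. By the insertion rule, each element $\vx \in \sD$ contributes $1$ to $\mA[h(\vx)]$. So the cell indexed by $h(\vq)$ accumulates
\[
\mA[h(\vq)] \;=\; \sum_{\vx \in \sD} \mathbf{1}\!\left[h(\vx) = h(\vq)\right].
\]
Taking expectation over the random draw of $h$ (and using linearity, since $\sD$ is fixed), this becomes
\[
\mathbb{E}\!\left[\mA[h(\vq)]\right] \;=\; \sum_{\vx \in \sD} \Pr\!\left[h(\vx) = h(\vq)\right].
\]

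Next I would analyze the single-term collision probability. Writing $h(\cdot) = (h^{(1)}(\cdot), \ldots, h^{(p)}(\cdot))$ where each $h^{(j)}$ is drawn independently from $\gH$, the event $\{h(\vx) = h(\vq)\}$ is the intersection of the $p$ coordinate-wise collision events. By independence of the $h^{(j)}$'s and by the defining property of the \textsc{LSH} family (the collision kernel $k$ is the per-hash collision probability), we obtain
\[
\Pr\!\left[h(\vx) = h(\vq)\right] \;=\; \prod_{j=1}^{p} \Pr\!\left[h^{(j)}(\vx) = h^{(j)}(\vq)\right] \;=\; k(\vx,\vq)^{p}.
\]
Substituting back yields the claimed identity $\mathbb{E}[\mA[h(\vq)]] = \sum_{\vx \in \sD} k^{p}(\vx,\vq)$.

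There is no real obstacle here: the statement is essentially a definitional unfolding plus one application of independence. The only point requiring a slight care is making sure the expectation is over the randomness of the hash family (not over data), and that the concatenation indeed produces independent coordinates — both of which are given by the preliminaries on \textsc{LSH} concatenation stated just above the theorem. The whole argument fits in a few lines.
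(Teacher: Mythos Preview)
Your proof is correct. The paper does not actually prove this theorem: it is stated in the preliminaries as a quoted result from \cite{luo2018arrays}, so there is no in-paper argument to compare against. What you have written is the standard derivation---indicator decomposition, linearity of expectation, and independence of the $p$ concatenated hashes---and it is complete as stated.
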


\textsc{ACE} is useful for \textsf{KDE} because it is an unbiased estimator. Moreover, \cite{coleman2019sub} have shown a tight bound on the variance.
\begin{theorem}\label{ace_var}
    Given a query $\vq$, the variance of \textsc{ACE} estimator $\mA[h(\vq)]$ is bounded as:
    \[var(\mA[h(\vq)])\leq \left( \sum_{\vx\in\sD}k^{p/2}(\vx,\vq)\right)^2\]
\end{theorem}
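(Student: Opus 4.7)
My plan is to bound the variance by the second moment, expand the second moment as a double sum of pairwise joint collision probabilities, and then bound each joint probability by the geometric mean of its two marginals. This last trick is what allows the double sum to factor as the square of a single sum, yielding the stated bound with exponent $p/2$.

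First, I would write $\mA[h(\vq)] = \sum_{\vx \in \sD} Y_\vx$ where $Y_\vx := \mathbf{1}[h(\vx) = h(\vq)]$. By the concatenation property of \textsc{LSH} stated in Section~2.1, each $Y_\vx$ is a Bernoulli random variable with success probability $k^p(\vx,\vq)$. Since $var(Z) \leq \mathbb{E}[Z^2]$ for any nonnegative random variable $Z$, it suffices to upper bound
\[
\mathbb{E}\bigl[(\mA[h(\vq)])^2\bigr] \;=\; \sum_{\vx, \vy \in \sD} \Pr\bigl[\,h(\vx) = h(\vq) \,\wedge\, h(\vy) = h(\vq)\,\bigr].
\]

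Next, I would bound each joint collision probability. Observing that the joint event is contained in each of its two marginal collision events, the joint probability is at most $\min\bigl(k^p(\vx,\vq),\, k^p(\vy,\vq)\bigr)$. Applying the AM-GM inequality $\min(a,b) \leq \sqrt{ab}$ for $a,b \geq 0$ gives the key bound
\[
\Pr\bigl[\,h(\vx) = h(\vq) \wedge h(\vy) = h(\vq)\,\bigr] \;\leq\; k^{p/2}(\vx,\vq)\, k^{p/2}(\vy,\vq).
\]
Summing over all ordered pairs and factoring the resulting product yields the desired inequality
\[
var\bigl(\mA[h(\vq)]\bigr) \;\leq\; \sum_{\vx, \vy \in \sD} k^{p/2}(\vx,\vq)\, k^{p/2}(\vy,\vq) \;=\; \left(\sum_{\vx \in \sD} k^{p/2}(\vx,\vq)\right)^{\!2}.
\]

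The main subtle point --- and essentially the only place where looseness is introduced --- is replacing $\min(a,b)$ by $\sqrt{ab}$. One cannot directly factor the joint collision probability as the product $k^p(\vx,\vq)\, k^p(\vy,\vq)$: the two events $\{h(\vx)=h(\vq)\}$ and $\{h(\vy)=h(\vq)\}$ share the random hash value $h(\vq)$ and so are not independent, and their joint probability is in general strictly larger than that product. The geometric-mean step is exactly what allows the double sum to separate into a square of single sums, and it is the source of the halved exponent $p/2$ in the stated bound. Beyond this observation the remaining calculation is routine.
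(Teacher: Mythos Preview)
Your argument is correct. The paper does not actually supply its own proof of this statement: it is quoted in the preliminaries as a result of Coleman, Baraniuk, and Shrivastava (cited as \cite{coleman2019sub}), so there is no in-paper proof to compare against. Your route --- bound variance by the second moment, expand as a double sum of joint collision probabilities, bound each joint probability by the minimum of the two marginals, then pass to the geometric mean $\min(a,b)\le\sqrt{ab}$ so the double sum factors --- is exactly the standard derivation of this bound and matches what the cited reference does.

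Two minor remarks. First, calling $\min(a,b)\le\sqrt{ab}$ ``AM--GM'' is a slight misnomer (AM--GM is $\sqrt{ab}\le(a+b)/2$); the inequality you use is simply that the geometric mean of two nonnegative numbers lies between them. Second, your closing discussion is exactly right that the events $\{h(\vx)=h(\vq)\}$ and $\{h(\vy)=h(\vq)\}$ are dependent through the shared value $h(\vq)$, which is why one cannot just multiply the marginals; the $\min$-then-$\sqrt{\cdot}$ step is precisely the device that recovers a factorable bound at the cost of the halved exponent.
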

This implies that we can estimate \textsf{KDE} using repeated \textsc{ACE} or \textsc{RACE} with very low relative error given a sufficient number of repetitions $p$.

\subsection{Exponential Histogram} 

We want to solve the \textsc{Basic Counting} problem for data streams with respect to the last $N$ elements seen so far. To be specific, we want to count the number of 1's in the last $N$ elements given a stream of data elements containing 0 or 1. \cite{datar2002maintaining} proposed an algorithm for this problem, which provides a $(1+\epsilon)$ estimate of the actual value at every instant. They use an \textsc{EH} to maintain the count of active 1's in that they are present within the last N elements. Every bucket in the histogram maintains the timestamp of the most recent 1, and the number of 1's is called the bucket size. The buckets are indexed as $1,2,...$ in decreasing order of their arrival times \ie the most recent bucket is indexed 1. Let the size of the $i^{th}$ bucket be denoted as $C_i$. When the timestamp of a bucket expires, we delete that bucket. The estimate for the number of 1's at any instant is given by subtracting half of the size of the last bucket from the total size of the existing buckets, 
\ie $(TOTAL - LAST)/2$. To guarantee counts with relative error of at most $\epsilon$, the following invariants are maintained by the algorithm (define $k$ as $\lceil 1/\epsilon\rceil$) :
\begin{enumerate}
    \item \textbf{Invariant 1:} Bucket sizes $c_1, c_2, \dots, c_m$ are such that $\forall j\leq m$ we have $\frac{c_j}{2(1 + \Sigma_{i=1}^{j-1} c_i)} \leq \frac{1}{k}$.
    \item \textbf{Invariant 2:} Bucket sizes are non-decreasing, i.e. $c_1 \leq c_2 \leq c_3 \leq \cdots \leq c_m$. Further, they are constrained to only powers of 2. Finally, for every bucket other than the last bucket, there are at least $\frac{k}{2}$ and at most $\frac{k}{2} +1$ buckets of that size.
\end{enumerate}
It follows from invariant 2 that to cover all active 1's, we need no more than $n\leq (\frac{k}{2}+1) \cdot (\log (2\frac{N}{k}+1)+1)$ buckets. The bucket size takes at most $\log N$ values, which can be stored using $\log \log N$ bits, and the timestamp requires $\log N$ bits. So the memory requirement for an EH is $\gO(\frac{1}{\epsilon}\log ^2N)$. By maintaining a counter each for $TOTAL$ and $LAST$, the query time becomes $\gO(1)$.

\section{Streaming (c,r)-Approximate Near Neighbor} \label{sec:ANN}

We generalize the classical Motwani--Indyk scheme \cite{indyk1998approximate} to the streaming setting by formulating the $(c,r)$-\textsf{ANN} problem under the assumption that the number of points inside any ball follows a Poisson distribution with an appropriate mean parameter. Within this framework, we prove that it suffices to retain only a sublinear sample of the data stream, specifically $\mathcal{O}(n^{1-\eta})$ points obtained through uniform random sampling.

\begin{figure}[H]
    \centering
    \includegraphics[height=0.15\paperheight,keepaspectratio]{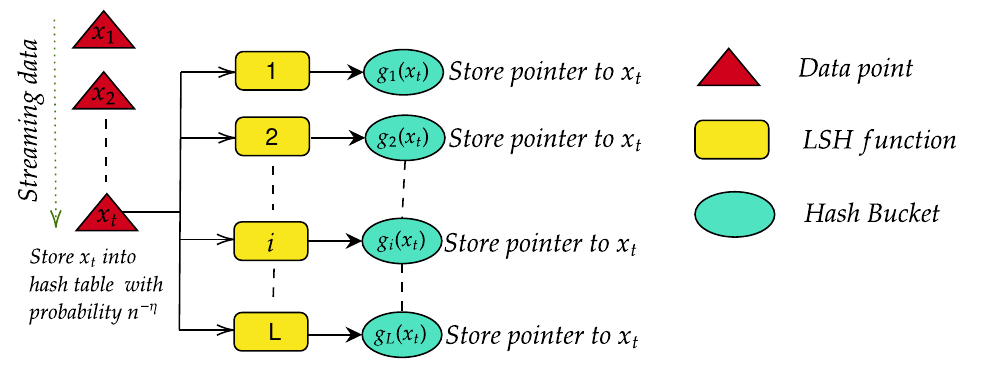}
    \caption{Insert mechanism of Algorithm~\ref{NN_algo} where, with probability $n^{-\eta}$, we store point $\vx_t$ into the hash table using $L$ independent hash functions $g_i(\vx)$.}
    \label{ann:insert}
\end{figure}

\begin{figure}[H]
\centering
  \centering
  \includegraphics[height=0.15\paperheight,keepaspectratio]{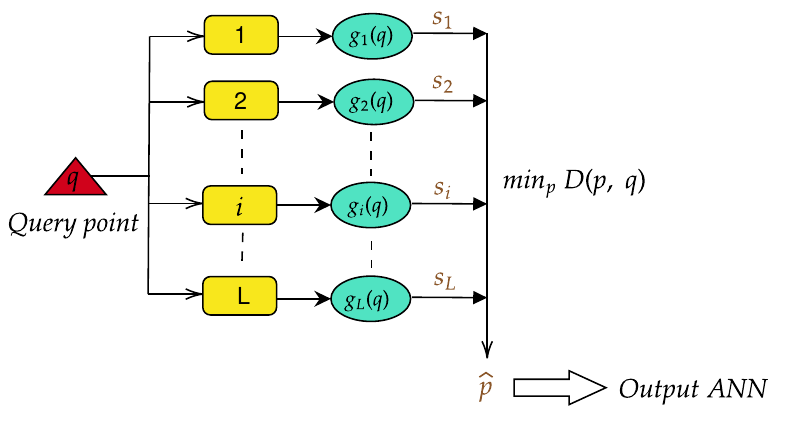}
\caption{Query mechanism of Algorithm~\ref{NN_algo} which retrieves the closest point to the query vector $\vq$ from a \emph{candidate set} $\sS$ constructed as the union of all hash bucket collision sets,\ie $\sS = \bigcup_{i=1}^{L} s_i$, where each $s_i$ denotes the set of points that collide with $\vq$ in the $i^{\text{th}}$ hash table.}
\label{ann:query}
\end{figure}

\subsection{The Algorithm}
Let $N$ be an upper bound on the size of the data stream $\sD$. We initialize a family of hash functions $\mathcal{H}$ with parameters $k$ and $L$, chosen as functions of $N$ and $\epsilon$. Below, we describe a scheme that inserts points from the stream into our data structure and employs the \textit{Query Processing} routine to solve the $(c,r)$-\textsf{ANN} problem for any query point $\vq$ (see Algorithm~\ref{NN_algo}). The insert and query mechanisms for our scheme are illustrated in Figures \ref{ann:insert} and \ref{ann:query}, respectively.



\begin{algorithm}[ht]
\DontPrintSemicolon
\SetAlgoNlRelativeSize{-1}
\caption{\textsf{S-ANN}: \textsf{ANN} sketch construction and querying for streaming.}\label{NN_algo}
\KwIn{Data stream $\mathcal{D}$, query point $\mathbf{q}$, \textsc{LSH} family $\mathcal{H}$, parameters $k$ \& $L$, sampling parameter $\eta$}
\KwOut{Approximate nearest neighbor $p^*$ or \texttt{NULL}}

Initialize $L$ independent hash functions $\{g_1, \dots, g_L\}$\;
\ForEach{$p \in \mathcal{D}$}{
  Decide whether to drop or store $p$ \tcp*{Uniform sample to store $O(n^{1-\eta})$ points}
  \For{$j = 1$ \KwTo $L$}{
    Insert $p$ into bucket $g_j(p)$\;
  }
}
\BlankLine
\textit{Query processing:}\;
Initialize candidate list $\mathcal{C} \gets \emptyset$\;
\For{$j = 1$ \KwTo $L$}{
  Retrieve all points from bucket $g_j(q)$ and add to $\mathcal{C}$\;
  \If{$|\mathcal{C}| \ge 3L$}{
    \textbf{break}\;
  }
}
Remove duplicates from $\mathcal{C}$\;
$p^* \gets \arg\min_{p_j \in \mathcal{C}} D(p_j, q)$\;
\If{$D(p^*, q) \le r_2$}{
  \Return $p^*$\;
}
\Return \texttt{NULL}\;
\end{algorithm}

\subsection{Correctness}
A query $\vq$ can succeed when certain events take place, which are mentioned here.
\begin{lemma} \label{eventlemma}
 Let $\sP$ be the set of points at the time when the query is executed. Define $B(\vp,r)$ as a ball of radius $r$ surrounding a point $\vp$. We define the following two events for any query point $\vq$:
    \begin{itemize}
        \item $\mathbf{E_1}:$ $\exists \vp' \in B(\vq, r)$ such that $g_j(\vp') = g_j(\vq)$ for some $j \in \{1, \dots, L\}$. 
        \item $\mathbf{E_2}:$ The number of points from $\sP - B(\vq, r_2)$ which hash to the same bucket as $\vq$ is less than $3L$ \ie
        \[
        \sum_{j=1}^{L} |(\sP - B(\vq, r_2)) \cap g_j^{-1}(g_j(\vq))| < 3L.
        \] 
    \end{itemize}
If $\mathbf{E_1}$ and $\mathbf{E_2}$ hold, then the query $\vq$ succeeds
\end{lemma}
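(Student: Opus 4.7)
The plan is to show that under $E_1$ and $E_2$, the candidate set $\gC$ assembled in Algorithm~\ref{NN_algo} is guaranteed to contain at least one point lying inside $B(\vq, r_2)$, so that the final argmin step returns a valid $(c,r)$-\textsf{ANN} answer rather than \texttt{NULL}. Since $r \leq r_2$, any point witnessing $E_1$ automatically lies in $B(\vq, r_2)$, so the whole argument reduces to tracking which points make it into $\gC$ before the early-termination condition $|\gC|\ge 3L$ fires.

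I would split the analysis into two cases according to whether the inner loop over hash tables terminates early. \textbf{Case 1: the loop runs to completion without triggering the break.} Then every point colliding with $\vq$ in any bucket $g_j(\vq)$ is added to $\gC$. By $E_1$ there exists $\vp' \in B(\vq, r)$ and some $j^* \in \{1,\dots,L\}$ with $g_{j^*}(\vp') = g_{j^*}(\vq)$, so $\vp' \in \gC$. Hence the argmin $p^*$ satisfies $\gD(p^*, \vq) \le \gD(\vp', \vq) \le r \le r_2$, and the algorithm returns $p^*$.

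\textbf{Case 2: the loop breaks at some iteration $j_0 \le L$, so $|\gC| \ge 3L$ at that point.} Partition $\gC$ into $\gC_{\text{near}} := \gC \cap B(\vq, r_2)$ and $\gC_{\text{far}} := \gC \setminus B(\vq, r_2)$. Every element of $\gC_{\text{far}}$ comes from $\sP \setminus B(\vq, r_2)$ and collides with $\vq$ in some table $g_j$ with $j \le j_0 \le L$, so it is counted by the sum $\sum_{j=1}^{L} |(\sP - B(\vq, r_2)) \cap g_j^{-1}(g_j(\vq))|$ appearing in $E_2$. Therefore $|\gC_{\text{far}}| < 3L \le |\gC|$, which forces $\gC_{\text{near}} \neq \emptyset$. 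Pick any $\vp'' \in \gC_{\text{near}}$; then $p^* = \arg\min_{p_j \in \gC} \gD(p_j, \vq)$ satisfies $\gD(p^*, \vq) \le \gD(\vp'', \vq) \le r_2$, so again the algorithm returns $p^*$ instead of \texttt{NULL}.

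The only subtle point, and the one I would flag as the main (mild) obstacle, is in Case 2: we must ensure that the duplicates-removal step and the break condition $|\gC| \ge 3L$ do not let far points crowd out all near points. Interpreting $|\gC|$ as the size after duplicate removal is the conservative reading, but since $|\gC_{\text{far}}|$ (with or without duplicates) is bounded by the multiset sum in $E_2$, the argument $|\gC_{\text{far}}| < 3L \le |\gC|$ still goes through, and no collision-probability or concentration reasoning is needed at this stage---those estimates appear only when bounding $\Pr[E_1]$ and $\Pr[E_2]$ later.
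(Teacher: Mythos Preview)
Your proof is correct and in fact handles the early-termination condition more carefully than the paper's own argument. The paper splits instead on whether the true nearest neighbor lies in $B(\vq,r)$: in its Case~1 it argues informally that $E_2$ leaves ``vacant spots'' in the size-$3L$ candidate list so the witness $\vp'$ from $E_1$ must get in, and its Case~2 (no near neighbor) is dismissed as trivially successful---but note that under the hypothesis $E_1$ this second case is vacuous, since $E_1$ already exhibits a stored point in $B(\vq,r)$. Your decomposition along the algorithm's control flow (loop completes vs.\ breaks early) is the more natural one here: it makes explicit why, even if the break fires before table $j^*$ is reached, the pigeonhole $|\gC_{\text{far}}|<3L\le|\gC|$ forces a near point into $\gC$. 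The paper's ``vacant spots'' phrasing glosses over exactly this possibility (that many \emph{near} points could trigger the break before $\vp'$ is scanned), which your Case~2 resolves cleanly. Both arguments are short and elementary; yours just tracks the algorithm more faithfully.
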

\begin{proof}
    Denote the nearest neighbor of query $\vq$  as $\vp^*$. We have two cases: 
    
    \textbf{Case 1} $\exists \vp^* \in B(\vq, r)$: Since the algorithm stores $3L$ candidate points, $\mathbf{E_2}$ holding implies that the candidate set has a non-zero number of vacant spots. Now, if $\mathbf{E_1}$ also holds, we know that there exists $\vp' \in B(\vq, r)$ such that $g_j(\vp') = g_j(\vq)$ for some $j \in \{1, \dots, L\}$. So this $p'$ must get added to the candidate set. Since we ultimately return a point that is closest to the query point $\vq$, we guarantee that we return $\vp'$ or a closer point. But since $\vp' \in B(\vq,r)$, trivially we have that $\vp' \in B(\vq, cr)$, hence the query $\vq$ succeeds. 
    
    \textbf{Case 2} $\not\exists \vp^* \in B(\vq, r)$: In this case, the algorithm can return $null$ or any point in $\sP$, so the query $\vq$ succeeds trivially.
\end{proof}
It is to be noted that the aforementioned $p'$ does not need to be the nearest neighbor of the query point. It suffices to have a point present in a ball of radius $r$ surrounding the query after the random sampling. In the following lemma, we show how we can set the parameter $k$ to guarantee the success of $\mathbf{E_2}$ with high probability. 

\begin{lemma}\label{e2lemma}
    If we set $k= \lceil \log_{1/p_2} n \rceil$, for a certain query $\vq$ event $\mathbf{E_2}$ succeeds with probability $P_2 \geq 1 - \frac{1}{3n^\eta}$ if we store $\gO(n^{1-\eta})$ points from the stream independent of how the data is distributed.
\end{lemma}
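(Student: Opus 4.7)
The plan is to control the expected number of ``far'' collisions via the amplification inherent to the concatenated hash $g_j = (h_{i_1}, \ldots, h_{i_k})$, and then apply Markov's inequality. The crucial observation is that Event $\mathbf{E_2}$ only counts collisions from points outside $B(\vq, r_2)$, for which the LSH guarantee gives a per-hash collision probability of at most $p_2$.

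First, I would fix an arbitrary point $\vp \in \sP \setminus B(\vq, r_2)$. Since $D(\vp, \vq) \geq r_2 = cr$, the $(r, cr, p_1, p_2)$-sensitivity of $\gH$ implies $\Pr[h(\vp) = h(\vq)] \leq p_2$ for a single draw $h \in \gH$. Because each $g_j$ is the concatenation of $k$ independent hashes from $\gH$, the collision probability under $g_j$ factorizes, giving $\Pr[g_j(\vp) = g_j(\vq)] \leq p_2^{\,k}$. With the choice $k = \lceil \log_{1/p_2} n \rceil$, this bound becomes at most $1/n$.

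Next, I would bound the expectation of the random variable
\[
X := \sum_{j=1}^{L} \bigl| (\sP \setminus B(\vq, r_2)) \cap g_j^{-1}(g_j(\vq)) \bigr|.
\]
Since the algorithm stores only $|\sP| = \gO(n^{1-\eta})$ points, linearity of expectation over the (at most) $n^{1-\eta}$ stored far points and $L$ hash tables yields
\[
\mathbb{E}[X] \;\leq\; L \cdot n^{1-\eta} \cdot p_2^{\,k} \;\leq\; L \cdot n^{1-\eta} \cdot \tfrac{1}{n} \;=\; L \cdot n^{-\eta}.
\]
Crucially, this bound never uses the distribution of the stream: it only invokes the LSH guarantee pointwise and the fact that the sampled set has size $\gO(n^{1-\eta})$, which is where the claim ``independent of how the data is distributed'' comes from.

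Finally, applying Markov's inequality to $X$ at threshold $3L$ gives
\[
\Pr[X \geq 3L] \;\leq\; \frac{\mathbb{E}[X]}{3L} \;\leq\; \frac{L \cdot n^{-\eta}}{3L} \;=\; \frac{1}{3n^{\eta}},
\]
so $\mathbf{E_2}$ holds with probability at least $1 - \tfrac{1}{3n^{\eta}}$, as claimed. I do not expect a serious obstacle here; the only mild subtlety is being careful that the ``per-hash'' collision probability $p_2^k$ applies because the concatenated coordinates are drawn independently, and that the size bound $|\sP| = \gO(n^{1-\eta})$ holds deterministically under the uniform sampling rule, so no conditioning on the sample is needed when taking the expectation.
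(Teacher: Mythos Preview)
Your proposal is correct and follows essentially the same route as the paper: bound the per-pair collision probability under a concatenated hash by $p_2^k \le 1/n$, multiply by the $\gO(n^{1-\eta})$ stored points and $L$ tables to get $\mathbb{E}[X] \le L n^{-\eta}$, then apply Markov at threshold $3L$. The only minor quibble is your remark that $|\sP| = \gO(n^{1-\eta})$ ``holds deterministically'' under uniform sampling with probability $n^{-\eta}$; strictly speaking the sample size is random, but the paper makes the same simplification and the argument goes through either by using the expected size directly or by a routine concentration step.
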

\begin{proof}
     For any $\vp' \in \sP - B(\vq, cr)$, the probability of collision under a fixed $g_j$ is at most $p_2^k \leq p_2^{\log_{1/p_2}n}= \frac1n$. 
    Since we store only $n^{1-\eta}$ points, the expected number of such collisions is at most $n^{-\eta}$ for a function $g_j$, and at most $L \cdot n^{-\eta}$ across all $L$ functions. 
    Let $\mathbf{N}$  be a random variable denoting the number of collisions. Using Markov's Inequality, we can say that the probability that more than $3L$ such collisions occur is:
    \begin{align*}
        P(\mathbf{N} \geq 3L) \leq \mathbb{E}[\mathbf{N}]/3L \leq \frac{L \cdot n^{-\eta}}{3L} = \frac{1}{3n^\eta}
    \end{align*}
     
     Thus the success probability for event $\mathbf{E_2}$, $P_2 =1-P(\mathbf{N} \geq 3L)\geq 1 - \frac{1}{3n^\eta}$.
\end{proof}

Now, we show that if we set $k$ as per Lemma \ref{e2lemma}, assuming that our data is obtained from a Poisson point process, we can guarantee the success of $\mathbf{E_1}$ with high probability for an appropriate choice of $L$.
\begin{lemma} \label{e1lemma}
    Assume that the points are distributed in such a manner that the number of points in every ball of radius $r$ is distributed as a Poisson random variable with mean $m$. Given that $k= \lceil \log_{1/p_2} n \rceil$, if we set $L = \frac{n^\rho}{p_1}$, for a certain query $\vq$ event $\mathbf{E_1}$ succeeds with probability $P_1 \geq (1 - e^{-mp})(1 - 1/e)$ on sampling $\gO(n^{1-\eta})$ points  from the stream
\end{lemma}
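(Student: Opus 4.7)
My plan is to decouple two independent sources of randomness---the Poisson point process (together with the uniform subsampling) and the random \textsc{LSH} functions $g_1,\dots,g_L$---bound each separately, and then combine.

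For the first piece I would invoke Poisson thinning. The number of points of $\sP$ lying in $B(\vq,r)$ is Poisson with mean $m$, and each point is independently retained with probability $p := n^{-\eta}$ during sampling (so in expectation $n^{1-\eta}$ points are stored in total). By the thinning property, the number of \emph{sampled} points inside $B(\vq,r)$ is Poisson with mean $mp$, and therefore
\[
\Pr[\exists\ \text{sampled point in } B(\vq,r)] \;=\; 1-e^{-mp}.
\]

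For the second piece, condition on this event and fix one such witness $\vp'\in B(\vq,r)$ by any rule that uses only the Poisson/Bernoulli randomness (so it is independent of the hash functions). Since $D(\vp',\vq)\le r$, the \textsc{LSH} property gives $\Pr[g_j(\vp')=g_j(\vq)]\ge p_1^k$ independently across $j$. Using $k=\lceil\log_{1/p_2}n\rceil$ together with $\rho=\log(1/p_1)/\log(1/p_2)$, a short calculation yields $p_1^k\ge p_1\cdot n^{-\rho}$, and then $L=n^\rho/p_1$ forces $L\cdot p_1^k\ge 1$. Hence
\[
\Pr\bigl[\forall j:\ g_j(\vp')\ne g_j(\vq)\bigr]\;\le\;(1-p_1^k)^L\;\le\;e^{-L\,p_1^k}\;\le\;\tfrac{1}{e},
\]
so the witness collides with $\vq$ in at least one hash table with probability $\ge 1-1/e$. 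Multiplying the two pieces via the independence of the hashes from the point process and the sampling gives $P_1\ge(1-e^{-mp})(1-1/e)$, as claimed.

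The main subtlety I anticipate is making the conditioning in the second piece clean: the witness sampled point must be chosen in a way that depends only on the stream and the sampling coins, not on $g_1,\dots,g_L$, so that the per-point \textsc{LSH} guarantee applies unconditionally. This is transparent because the \textsc{LSH} hashes are drawn independently of the stream and of the sampling decisions. Everything else reduces to the familiar Motwani--Indyk algebra verifying $L\,p_1^k\ge 1$, which is exactly what the specified choices of $k$ and $L$ enforce; Poisson thinning absorbs the loss incurred by sampling only an $n^{-\eta}$-fraction of the stream.
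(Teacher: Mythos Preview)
Your proposal is correct and follows essentially the same approach as the paper: both decompose $\mathbf{E_1}$ into (i) the probability that at least one sampled point survives in $B(\vq,r)$ and (ii) the probability that a fixed witness collides with $\vq$ under some $g_j$, then multiply using independence of the hashes from the sampling. The only cosmetic difference is that you invoke Poisson thinning to get $1-e^{-mp}$ directly, whereas the paper computes the probability generating function $\mathbb{E}[(1-p)^{\mathbf{K}}]=e^{-mp}$ explicitly; your treatment of the conditioning (choosing the witness independently of $g_1,\dots,g_L$) is in fact more careful than the paper's, which elides this point.
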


\begin{proof}
We require that $\exists \vp' \in B(\vq, r)$ such that $g_j(\vp') = g_j(\vq)$ for some $j \in \{1, \dots, L\}$.
First, consider the probability that there is at least one point retained in a ball of radius $r$ surrounding the query after the uniform sampling, \ie $\exists \vp' \in B(\vq, r)$. We know that the data follows a Poisson distribution, so if we say that the number of points in a ball of radius r surrounding a query is a Poisson random variable \textbf{K} with mean $m$: 
    \begin{align*}
        \mathbb{P}(\text{No points in the r-ball after uniform sampling}) &= \mathbb{E}[(1-p)^\textbf{K}] ,\textbf{K} \sim Poisson(m)\\ 
        &= \sum_0^{\infty} (1-p)^k \cdot e^{-m}\frac{m^k}{k!} \\ 
        &= e^{-m} \sum_0^{\infty} \frac{(m(1-p))^k}{k!} \\
        &= e^{-m} e^{m(1-p)} \\
        &= e^{-mp}
    \end{align*}
    where $p = n^{-\eta}$ is the probability that we keep the point in the data structure while uniformly sampling. This implies that the probability of having at least one point $(\vp')$ close to the query is $(1-e^ {-mp})$. 
    
    Now, given that there exists $\vp' \in B(\vq,r)$, we can lower bound the probability that $g_j(\vp') = g_j(\vq)$ for some $j \in \{1, \dots, L\}$ as follows:
    \begin{align}
        \label{eqn_local}
         P(g_j(\vp')=g_j(\vq))\geq p_1^k \geq p_1^{\log_{1/p_2} n + 1} = p_1 n^{-\log_{1/p_2} (1/p_1)} = p_1 n^{-\rho}
    \end{align}
    
    Combining these two statements, the probability of success of event $\mathbf{E_1}$ is lower bounded as:
    
    \begin{equation*}
        \begin{split}
            P(\mathbf{E_1})&=P(\exists\vp'\in B(\vq,r)\text{ such that }g_j(\vp') = g_j(\vq) \text{ for some j }\in \{1, \dots, L\})\\
            &=P(\exists\vp'\in B(\vq,r)\wedge g_j(\vp') = g_j(\vq) \text{ for some j }\in \{1, \dots, L\})\\
            &=P(\exists\vp'\in B(\vq,r))\cdot P(g_j(\vp') = g_j(\vq) \text{ for some j }\in \{1, \dots, L\})\\
            &=(1 - e^{-mp})\cdot P\left(\bigcup_{j=1}^Lg_j(\vp') = g_j(\vq)\right)\\
            &=(1 - e^{-mp})\cdot \left(1-P\left(\bigcap_{j=1}^Lg_j(\vp') \neq g_j(\vq)\right)\right)\text{ (using \textit{De Morgan's Law})}
        \end{split}
        \end{equation*}
    \begin{equation*}
        \begin{split}
            &=(1 - e^{-mp})\cdot \left(1-\prod_{j=1}^LP(g_j(\vp') \neq g_j(\vq))\right)\\
            &=(1 - e^{-mp})\cdot \left(1-\prod_{j=1}^L\left(1-P(g_j(\vp')= g_j(\vq))\right)\right)\\
            &\geq (1 - e^{-mp})(1 - (1 - p_1 n^{-\rho})^L)\text{ (using equation \ref{eqn_local})}
        \end{split}
    \end{equation*}
    By setting $L = n^{\rho}/ p_1$,
   \begin{equation*}
   \begin{split}
       P(\mathbf{E_1}) = P_1 &\geq (1 - e^{-mp})(1 - (1 - p_1 n^{-\rho})^{n^{\rho}/ p_1})\\
       & \geq  (1 - e^{-mp})\{1 - (e^{-p_1n^{-\rho}})^{n^{\rho}/p_1}\}\\
       &=(1 - e^{-mp})(1 - 1/e)
   \end{split}
   \end{equation*}
     In the second inequality, we have used $1-x\leq e^{-x}$.
\end{proof}
\noindent We can now use the above Lemmas (\ref{eventlemma},~\ref{e2lemma},~\ref{e1lemma}) to prove the following theorem.

\begin{theorem} \label{sann}
Let $(\gX, \gD)$ be a metric space, and suppose that there exists an $(r, cr, p_{1}, p_{2})$-sensitive family $\mathcal{H}$, with $p_{1}, p_{2} \in (0,1)$, and define $\rho=\frac{\log(\frac{1}{p_1})}{\log(\frac{1}{p_2})}$. We further assume that the number of points contained in any ball of radius $r$ can be modeled as a Poisson random variable with mean $m$, where $m \geq Cn^ {\eta}$ for some constant $C > 0$. Then, for a point set $\sP \subseteq \gX$ comprising at most $n$ points, there exists a data structure for streaming $(c,r)$-nearest neighbor search with the following guarantees:
\begin{itemize}
    \item The data structure stores only $\gO(n^{1-\eta})$ points from the stream.  
    \item Each query requires at most $\gO\!\left({n^{\rho}}/{p_{1}}\right)$
    distance computations and $\gO\!\left(\tfrac{n^{\rho}}{p_{1}} \cdot \log_{1/p_{2}} n\right)$
    evaluations of hash functions from $\mathcal{H}$. The same bounds hold for updates.  
    \item The data structure uses at most $\gO(n^{1+\rho -\eta} / p_1)$ words of space, in addition to the space required to store $\sP$.
\end{itemize}
 The probability of failure is at most $\frac{1}{3n^{\eta}} \;+\; \frac{e^{mp} + e - 1}{e^{mp+1}}$, which is less than 1 for an appropriate choice of C.
\end{theorem}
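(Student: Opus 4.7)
The plan is to combine the three preceding lemmas (Lemma~\ref{eventlemma}, Lemma~\ref{e2lemma}, Lemma~\ref{e1lemma}) to obtain the failure probability, and then account separately for the space, update, and query costs that follow directly from the algorithm's structure. Concretely, I would set the \textsc{LSH} parameters as dictated by those lemmas, namely $k=\lceil\log_{1/p_2}n\rceil$ and $L=n^{\rho}/p_1$, and fix the retention probability at $p=n^{-\eta}$. With these choices, a query succeeds whenever both $\mathbf{E_1}$ and $\mathbf{E_2}$ hold (Lemma~\ref{eventlemma}), so I would estimate the failure probability by a union bound $P(\text{fail})\le P(\neg\mathbf{E_1})+P(\neg\mathbf{E_2})$.

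For the failure probability, plugging the bounds from Lemmas~\ref{e2lemma} and~\ref{e1lemma} gives
\[
P(\text{fail}) \;\le\; \frac{1}{3n^{\eta}} + 1-(1-e^{-mp})\!\left(1-\tfrac{1}{e}\right)
\;=\; \frac{1}{3n^{\eta}} + \frac{1+(e-1)e^{-mp}}{e}
\;=\; \frac{1}{3n^{\eta}} + \frac{e^{mp}+e-1}{e^{mp+1}},
\]
which is exactly the quantity stated in the theorem. Since $m\ge Cn^{\eta}$ and $p=n^{-\eta}$, we have $mp\ge C$, so the second term is bounded by $(1+(e-1)e^{-C})/e$, which tends to $1/e$ as $C$ grows. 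Choosing $C$ sufficiently large (for instance so that $(e-1)e^{-C}<e-1-1/(3n^{\eta})\cdot e$) makes the full expression strictly less than $1$.

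For the resource bounds, I would argue as follows. Uniform sampling at rate $p=n^{-\eta}$ retains $\mathcal{O}(n^{1-\eta})$ points in expectation (and, with standard Chernoff concentration, with high probability), giving the stated sample-size guarantee. Each retained point is inserted into $L$ hash tables and each insertion involves evaluating $k$ base hash functions, yielding update cost $\mathcal{O}(Lk)=\mathcal{O}\!\left(\tfrac{n^{\rho}}{p_1}\log_{1/p_2}n\right)$ hash evaluations. Total space is the number of stored pointers, $\mathcal{O}(n^{1-\eta}\cdot L)=\mathcal{O}(n^{1+\rho-\eta}/p_1)$, matching the claim. For query time, the algorithm computes $L$ hashes of $\vq$, each requiring $k$ base evaluations, then inspects the corresponding buckets; by construction the candidate list is truncated at $3L$, so at most $3L=\mathcal{O}(n^{\rho}/p_1)$ distance computations are performed.

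The one step that requires some care is the interplay between the Poisson assumption and the sublinear sampling. The random retention is independent of the hashing, but Lemma~\ref{e1lemma} implicitly treats ``points surviving the sample in $B(\vq,r)$'' as Poisson with mean $mp$ (thinning of a Poisson with mean $m$); I would make this step explicit before quoting the $1-e^{-mp}$ lower bound. Once this is done, collecting the pieces above yields all four assertions of the theorem, and the main obstacle—verifying that the failure probability stays below $1$—is resolved by the lower bound $m\ge Cn^{\eta}$ with $C$ chosen appropriately.
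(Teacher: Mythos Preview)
Your proposal is correct and follows essentially the same approach as the paper: combine Lemmas~\ref{eventlemma}, \ref{e2lemma}, and \ref{e1lemma} via a union bound on $\neg\mathbf{E_1}$ and $\neg\mathbf{E_2}$, and read off the resource bounds from the parameter choices $k=\lceil\log_{1/p_2}n\rceil$, $L=n^{\rho}/p_1$, $p=n^{-\eta}$. In fact you are more explicit than the paper, which omits the algebraic simplification of the failure probability and the accounting for space and time; your remark about Poisson thinning is unnecessary, however, since Lemma~\ref{e1lemma} already computes $\mathbb{E}[(1-p)^{\mathbf K}]=e^{-mp}$ directly.
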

\begin{proof}
  Assume that there exists $\vp^* \in B(\vq, r)$ (otherwise, there is nothing to prove). From Lemma \ref{eventlemma}, we can say that for a query to succeed, events $\mathbf{E_1}$ and $\mathbf{E_2}$ must occur with constant probability. 

From Lemma \ref{e2lemma} and Lemma \ref{e1lemma}, we can infer that for appropriate choice of $k$ and $L$, the query fails with probability at most $\frac{1}{3n^{\eta}} + \frac{e^{mp} + e - 1}{e^{mp+1}}$. This proves the failure probability of the theorem. 
    
\end{proof}

\subsection{Extension to Batch Queries}
The result of Theorem~\ref{sann} extends naturally to the batch streaming setting, where a query 
consists of a set of points $\sQ = \{\vq_i\}_{i=1}^B$. Each batch can be viewed 
as $B$ independent queries, and the guarantees of Theorem~\ref{sann} apply to 
each of them. Moreover, the structure admits straightforward parallelization, making 
batch queries especially efficient in practice. 
\begin{corollary}
    The Streaming \textsf{ANN} data structure extends to the batch streaming setting, where a query 
consists of a set $\sQ = \{\vq_i\}_{i=1}^B$. In this case:
\begin{itemize}
    \item The data structure stores only $\gO(n^{1-\eta})$ points from the stream. 
    \item Each batch requires at most 
    $\gO\!\left(B \cdot \tfrac{n^{\rho}}{p_{1}}\right)$ 
    distance computations and 
    $\gO\!\left(B \cdot \tfrac{n^{\rho}}{p_{1}} \cdot \log_{1/p_{2}} n\right)$ 
    evaluations of hash functions from $\mathcal{H}$.  
    \item The data structure uses at most $\gO(n^{1+\rho -\eta} / p_1)$ words of space, in addition to the space required to store $\sP$.
\end{itemize}
The probability of failure of the batch is at most $B(\frac{1}{3n^{\eta}}+\frac{e^{mp} + e -1}{e^{mp+1}})$, with each independent query failing with probability at most $\frac{1}{3n^{\eta}}+\frac{e^{mp} + e -1}{e^{mp+1}}$.
\end{corollary}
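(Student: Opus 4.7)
The plan is to reduce the batch statement to $B$ independent applications of Theorem~\ref{sann}, noting that the data structure itself (the sample of $\gO(n^{1-\eta})$ stored points together with the $L$ hash tables) is built exactly as before and is independent of the query. Hence the two space bounds — $\gO(n^{1-\eta})$ retained stream points and $\gO(n^{1+\rho-\eta}/p_1)$ words for the hash-table sketch — transfer verbatim from Theorem~\ref{sann} without any further argument.

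Next I would handle the time bounds. For each query $\vq_i \in \sQ$, the query-processing routine in Algorithm~\ref{NN_algo} performs the same work as in the single-query case: it retrieves candidates from at most $L = n^{\rho}/p_1$ buckets (requiring $\gO(n^{\rho}/p_1 \cdot \log_{1/p_2} n)$ hash evaluations to locate the buckets, since each $g_j$ is a concatenation of $k = \lceil \log_{1/p_2} n\rceil$ basic hashes), and truncates after at most $3L$ candidates, yielding $\gO(n^{\rho}/p_1)$ distance computations. Summing over the $B$ queries gives the claimed batch totals, and since the $B$ queries touch disjoint bucket-retrieval calls on a shared sketch, they can be executed in parallel without contention on the read-only data structure.

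For correctness and failure probability, I would argue query-by-query. Fix any $\vq_i \in \sQ$. Since the sketch is constructed independently of the queries, the analysis of events $\mathbf{E_1}$ and $\mathbf{E_2}$ (Lemmas~\ref{e2lemma} and \ref{e1lemma}) applies to $\vq_i$ unchanged, yielding
\[
\Pr[\text{query } \vq_i \text{ fails}] \;\le\; \frac{1}{3n^{\eta}} + \frac{e^{mp} + e - 1}{e^{mp+1}}.
\]
This establishes the per-query guarantee. A union bound over the $B$ queries in $\sQ$ then gives
\[
\Pr[\exists i: \vq_i \text{ fails}] \;\le\; B\left(\frac{1}{3n^{\eta}} + \frac{e^{mp} + e - 1}{e^{mp+1}}\right),
\]
which is the batch failure probability stated in the corollary.

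The only subtle point — and what I would flag explicitly — is that the $B$ query events are \emph{not} mutually independent, since they are evaluated against the same realization of the hash functions $g_1,\dots,g_L$ and the same sampled subset of $\sP$. This is precisely why I rely on a union bound rather than a product bound; the per-query probability from Theorem~\ref{sann} holds pointwise for any fixed query regardless of correlations, so the union bound is valid and yields the claimed expression. Beyond this, the argument is a direct lifting of the single-query guarantees, so no further machinery is required.
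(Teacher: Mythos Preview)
Your proposal is correct and follows the same approach as the paper, which simply observes that a batch is $B$ applications of Theorem~\ref{sann} and states the corollary without further proof. Your treatment is in fact more careful than the paper's: you explicitly justify the union bound by noting that the $B$ queries share the same hash functions and sampled subset and hence are not truly independent, a subtlety the paper glosses over with the phrase ``independent queries.''
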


\subsection{\textsf{ANN} in the Turnstile Model} In the turnstile setting, arbitrary deletions can break \textsf{ANN} guarantees if the nearest neighbor within a query ball is removed. To mitigate this, we assume an adversary can delete at most $d$ points from any ball of radius $r$. Under this restriction, the earlier sublinear-sample guarantees hold by bounding the probability that a ball contains at least $d+1$ points. We prove that, under natural assumptions on point distributions in a stream and limiting deletions per region, we can maintain a sublinear-sized data structure that supports efficient approximate nearest neighbor queries with low failure probability, handling both insertions and deletions. 

\begin{lemma}\label{poissontailbound} (Tail Bound for a Poisson random variable)
Let $\textbf{S} \sim \mathrm{Poisson}(\lambda)$ and $d \leq \lambda$. Then
$P(\textbf{S} \leq d) \leq e^{d - \lambda + d \ln\!\frac{\lambda}{d}}$.
\end{lemma}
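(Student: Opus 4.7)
The plan is to prove this by the standard Chernoff method applied to the lower tail, using the moment generating function of the Poisson distribution and then optimizing over the free parameter.

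First I would rewrite the event: for any $t \geq 0$,
\[
P(\textbf{S} \leq d) = P(-\textbf{S} \geq -d) = P\bigl(e^{-t\textbf{S}} \geq e^{-td}\bigr) \leq e^{td}\,\mathbb{E}[e^{-t\textbf{S}}],
\]
by Markov's inequality, which is valid since $e^{-t\textbf{S}}$ is a nonnegative random variable.

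Next I would compute the moment generating function of $\textbf{S} \sim \mathrm{Poisson}(\lambda)$ at the point $-t$. Expanding the series,
\[
\mathbb{E}[e^{-t\textbf{S}}] = \sum_{k=0}^{\infty} e^{-tk}\,e^{-\lambda}\frac{\lambda^k}{k!} = e^{-\lambda}\sum_{k=0}^{\infty}\frac{(\lambda e^{-t})^k}{k!} = e^{\lambda(e^{-t} - 1)}.
\]
Substituting back gives the general Chernoff bound
\[
P(\textbf{S} \leq d) \leq \exp\!\bigl(td + \lambda(e^{-t} - 1)\bigr), \qquad t \geq 0.
\]

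Finally I would optimize the right-hand side over $t \geq 0$. Differentiating the exponent in $t$ and setting it to zero yields $d - \lambda e^{-t} = 0$, i.e. $e^{-t^\ast} = d/\lambda$, equivalently $t^\ast = \ln(\lambda/d)$. The hypothesis $d \leq \lambda$ ensures $t^\ast \geq 0$, so this minimizer lies in the admissible range. Plugging $t^\ast$ back in, the exponent becomes
\[
d\ln\!\tfrac{\lambda}{d} + \lambda\!\left(\tfrac{d}{\lambda} - 1\right) = d - \lambda + d\ln\!\tfrac{\lambda}{d},
\]
which gives exactly the claimed inequality $P(\textbf{S} \leq d) \leq e^{\,d - \lambda + d\ln(\lambda/d)}$.

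There is no real obstacle here; the only point to be careful about is verifying that the optimizer $t^\ast = \ln(\lambda/d)$ satisfies $t^\ast \geq 0$, which is guaranteed by the assumption $d \leq \lambda$, and the degenerate cases $d = 0$ or $d = \lambda$ can be checked separately (for $d = 0$ the bound reads $P(\textbf{S} = 0) \leq e^{-\lambda}$, which is tight, and for $d = \lambda$ it reduces to the trivial bound $P(\textbf{S} \leq \lambda) \leq 1$).
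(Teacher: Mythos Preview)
Your proof is correct and follows essentially the same approach as the paper: apply Markov's inequality to $e^{-t\textbf{S}}$, use the Poisson MGF $\mathbb{E}[e^{-t\textbf{S}}]=e^{\lambda(e^{-t}-1)}$, and optimize the exponent over $t\ge 0$ to obtain $t^\ast=\ln(\lambda/d)$. Your treatment is slightly more detailed (explicit series for the MGF, handling of the boundary cases $d=0$ and $d=\lambda$), but the argument is the same Chernoff-type bound as in the paper.
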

\begin{proof}
For any $t \geq 0$, we can say $P(\textbf{S} \leq d) = P\!\big(e^{-t\textbf{S}} \geq e^{-td}\big)
\leq e^{td}\,\mathbb{E}\!\left[e^{-t\textbf{S}}\right]$ using Markov's inequality. We also know that the MGF of a Poisson random variable is $
\mathbb{E}\!\left[e^{-tS}\right]=e^{\lambda(e^{-t}-1)}$. Thus, putting it all together, we can say that,
\[
P(\textbf{S} \leq d) \leq e^{\lambda(e^{-t}-1)+td}
\]

Define $\varphi(t) := \lambda(e^{-t}-1)+td$. Differentiating and setting $\varphi'(t)=0$ gives $e^{-t} = d/\lambda$. Since $d \leq \lambda$, we obtain the optimum as $t^\star = \ln\!\frac{\lambda}{d}$. Substituting $t^\star$ into $\varphi(t)$ yields
\[
\varphi(t^\star) = \lambda\!\left(\tfrac{d}{\lambda}-1\right) + d \ln\!\frac{\lambda}{d}
= d - \lambda + d \ln\!\frac{\lambda}{d}
\]
Hence, we obtain the result stated in the lemma
\[
P(\textbf{S} \leq d) \leq e^{d - \lambda + d \ln\!\frac{\lambda}{d}}
\]
\end{proof}
\begin{lemma}\label{poissonthinning} (Poisson Thinning)
Let $\textbf{K}\sim\mathrm{Poisson}(m)$ be the number of points in a ball. Suppose each point is kept independently with probability $p\in[0,1]$ during random sampling. Let $\textbf{S}$ denote the number of points that remain after sampling. Then $\textbf{S} \sim \mathrm{Poisson}(mp)$.
\end{lemma}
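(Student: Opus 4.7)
The plan is to compute the distribution of $\textbf{S}$ directly by conditioning on $\textbf{K}$. Given $\textbf{K}=k$, the number of retained points is simply $\mathrm{Binomial}(k,p)$ since each of the $k$ points is independently kept with probability $p$. So by the law of total probability,
\[
P(\textbf{S}=s) \;=\; \sum_{k=s}^{\infty} P(\textbf{S}=s \mid \textbf{K}=k)\,P(\textbf{K}=k)
\;=\; \sum_{k=s}^{\infty}\binom{k}{s}p^{s}(1-p)^{k-s}\cdot e^{-m}\frac{m^{k}}{k!}.
\]

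Next I would pull the $s$-dependent factors out, writing $\binom{k}{s}/k!=1/(s!(k-s)!)$, giving
\[
P(\textbf{S}=s)\;=\;e^{-m}\frac{(mp)^{s}}{s!}\sum_{k=s}^{\infty}\frac{(m(1-p))^{k-s}}{(k-s)!}.
\]
The key observation is that a change of index $j=k-s$ turns the remaining sum into the Taylor expansion of the exponential, $\sum_{j\ge 0}(m(1-p))^{j}/j!=e^{m(1-p)}$. Substituting back collapses $e^{-m}\cdot e^{m(1-p)}=e^{-mp}$, yielding $P(\textbf{S}=s)=e^{-mp}(mp)^{s}/s!$, which is exactly the $\mathrm{Poisson}(mp)$ probability mass function.

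There is essentially no obstacle here, since the argument is a standard manipulation; the only thing to be careful about is the lower limit of the sum (starting at $k=s$, because $P(\textbf{S}=s\mid \textbf{K}=k)=0$ for $k<s$) and the reindexing step. As an alternative I could present the same fact via probability generating functions: since $G_{\textbf{K}}(z)=e^{m(z-1)}$ and $G_{\textbf{S}\mid\textbf{K}=k}(z)=(1-p+pz)^{k}$, the tower property gives $G_{\textbf{S}}(z)=G_{\textbf{K}}(1-p+pz)=e^{mp(z-1)}$, which uniquely identifies $\textbf{S}$ as $\mathrm{Poisson}(mp)$. I would likely use the direct sum-based derivation since it keeps the proof self-contained and matches the elementary style used earlier in the paper when summing $e^{-m}m^{k}/k!$ series.
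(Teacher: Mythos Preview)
Your proposal is correct and follows essentially the same approach as the paper: condition on $\textbf{K}$, use that $\textbf{S}\mid\textbf{K}=k$ is $\mathrm{Binomial}(k,p)$, factor out $(mp)^s/s!$, and recognize the remaining sum as the exponential series $e^{m(1-p)}$. The only cosmetic difference is that you make the reindexing $j=k-s$ explicit, whereas the paper leaves it implicit.
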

\begin{proof}
    Conditioning on $\textbf{K}$, given $\textbf{K}=k$, we know that the number of retained points $\textbf{S}$ follows a $\mathrm{Binomial}(k,p)$ distribution:
\[
P(\textbf{S}=s\mid \textbf{K}=k)=\binom{k}{s} p^s(1-p)^{\,k-s}
\]
Thus, we can obtain the probability density function of $\textbf{S}$ as 
\begin{align*}
P(\textbf{S}=s) &=\sum_{k=s}^{\infty}P(\textbf{K}=k)P(\textbf{S}=s\mid \textbf{K}=k)\\
&=\sum_{k=s}^{\infty} e^{-m}\frac{m^k}{k!}\binom{k}{s} p^s(1-p)^{\,k-s} \\
&= \sum_{k=s}^{\infty}e^{-m}\frac{(mp)^s}{s!}\cdot\frac{\big(m(1-p)\big)^{\,k-s}}{(k-s)!}
\end{align*}

Summing over $k\ge s$ gives a Poisson tail series that sums to an exponential:
\[
P(\textbf{S}=s)
= e^{-m}\frac{(mp)^s}{s!}\sum_{t=0}^{\infty}\frac{\big(m(1-p)\big)^t}{t!}
= e^{-m}\frac{(mp)^s}{s!}\,e^{\,m(1-p)}
= e^{-mp}\frac{(mp)^s}{s!},
\]
which is the probability mass function for  $\mathrm{Poisson}(mp)$ . Therefore $\textbf{S}\sim\mathrm{Poisson}(mp)$.
\end{proof}

We use these results to prove the theorem for \textsf{ANN} under the turnstile model.

\begin{theorem}\label{thm:ANN:turnstile}
Let $(\gX, \gD)$ be a metric space, and suppose there exists an $(r, cr, p_{1}, p_{2})$-sensitive family $\mathcal{H}$, with $p_{1}, p_{2} \in (0,1)$, and define $\rho=\frac{\log(\frac{1}{p_1})}{\log(\frac{1}{p_2})}$. We further assume that the number of points contained in any ball of radius $r$ can be modeled as a Poisson random variable with mean $m$, where $m \geq Cn^ {\eta}$ for some constant $C > 0$. Assume that an adversary may delete up to $d$ points from any $r$-ball (strict turnstile) such that $d \leq mp$. Then, for a point set $\sP \subseteq \gX$ comprising at most $n$ points, there exists a data structure for turnstile streaming $(c,r)$-nearest neighbor search with the following guarantees:
\begin{itemize}
    \item The data structure stores only $\gO(n^{1-\eta})$ points from the stream.  
    \item Each query requires at most $\gO\!\left({n^{\rho}}/{p_{1}}\right)$
    distance computations and $\gO\!\left(\tfrac{n^{\rho}}{p_{1}} \cdot \log_{1/p_{2}} n\right)$
    evaluations of hash functions from $\mathcal{H}$. The same bounds hold for updates.
    \item The data structure supports arbitrary deletion of points as per the strict turnstile model (up to $d$ points from each r-ball) 
    \item The data structure uses at most 
    $\gO(n^{1+\rho -\eta} / p_1)$ words of space, in addition to the space required to store $\sP$.
\end{itemize}
 The failure probability is at most $\frac{1}{3n^{\eta}} + \frac{1}{e} + e^{d - mp + d\ln{\frac{mp}{d}}}(1 - \frac{1}{e})$, which is less than 1 for an appropriate choice of $C$ and $d$.
\end{theorem}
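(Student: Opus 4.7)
The plan is to mirror the proof of Theorem~\ref{sann}, decomposing query success into the two events $\mathbf{E_1}$ and $\mathbf{E_2}$ from Lemma~\ref{eventlemma}, and modifying only what the adversarial deletions actually affect. First, I would observe that the structural parameters are inherited verbatim from the insertion-only case: sampling at rate $p = n^{-\eta}$ retains $\gO(n^{1-\eta})$ points, and the choices $k = \lceil \log_{1/p_2} n \rceil$ and $L = n^{\rho}/p_1$ reproduce the same query-time, update-time, and space bounds. So the only new work is bounding the failure probability under strict turnstile updates.

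For event $\mathbf{E_2}$, the Markov-based argument of Lemma~\ref{e2lemma} carries over unchanged: adversarial deletions can only reduce the number of far points that collide with $\vq$, so $P(\overline{\mathbf{E_2}}) \leq 1/(3n^{\eta})$ is preserved. The core modification is for event $\mathbf{E_1}$. In Lemma~\ref{e1lemma}, the event ``at least one sampled point lies in $B(\vq, r)$'' had probability $1 - e^{-mp}$ via Poisson thinning. In the turnstile setting, an adversary may remove up to $d$ of those points, so I would instead require at least $d+1$ sampled points inside the ball so that at least one survives deletion. Lemma~\ref{poissonthinning} gives that the post-sampling count $\textbf{S}$ is $\mathrm{Poisson}(mp)$, and then Lemma~\ref{poissontailbound} (valid precisely because we assume $d \leq mp$) gives $P(\textbf{S} \leq d) \leq e^{d - mp + d \ln(mp/d)}$. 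Conditional on a surviving close point, the collision analysis of Lemma~\ref{e1lemma}---concatenating $k$ hashes, taking a union bound over $L$ tables, and applying $1 - x \leq e^{-x}$---goes through verbatim, yielding
\[
P(\mathbf{E_1}) \;\geq\; \bigl(1 - e^{d - mp + d \ln(mp/d)}\bigr)\bigl(1 - 1/e\bigr).
\]
A union bound over $\overline{\mathbf{E_1}}$ and $\overline{\mathbf{E_2}}$ then reproduces the claimed failure bound $\tfrac{1}{3n^\eta} + \tfrac{1}{e} + e^{d - mp + d \ln(mp/d)}(1 - 1/e)$.

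The main subtlety---where I would spend the most care---is the regime condition $d \leq mp$ and how it interacts with the constant $C$ in $m \geq C n^{\eta}$. Since $1 - x + \ln x \leq 0$ for $x \geq 1$, the factor $e^{d - mp + d\ln(mp/d)}$ never exceeds $1$, but to keep the overall failure probability strictly below $1$ we need it bounded well away from $1$. With $p = n^{-\eta}$ we have $mp \geq C$, so $C$ and $d$ must be chosen so that the ratio $mp/d$ is bounded well above $1$---equivalently, the adversary's per-ball deletion budget is only a fraction of the expected surviving mass in that ball. This is exactly what ``appropriate choice of $C$ and $d$'' encodes in the theorem statement, and it is the natural turnstile analogue of the Poisson-mean condition from Theorem~\ref{sann}.
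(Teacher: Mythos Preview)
Your proposal is correct and matches the paper's proof essentially line for line: reuse the structural parameters from the insertion-only case, note that $\mathbf{E_2}$ can only improve under deletions, and for $\mathbf{E_1}$ replace ``at least one sampled point survives'' with ``at least $d+1$ sampled points survive'' via Lemma~\ref{poissonthinning} and Lemma~\ref{poissontailbound}, then reuse the collision bound from Lemma~\ref{e1lemma}. Your closing discussion of the $d \leq mp$ regime and the role of $C$ is in fact more explicit than what the paper provides.
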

\begin{proof}
    The proof does not vary too much from that of the vanilla streaming case. For correctness, we still require Lemma \ref{eventlemma} to hold. It is easy to see that $\mathbf{E_2}$ as defined in Lemma \ref{eventlemma}, holds trivially on deletion of points under the turnstile model, because the probability of hashing far-away points strictly decreases on deleting points from the data structure.
    
    We need to show that $\mathbf{E_1}$ still holds with sufficiently high probability. We follow a similar approach to Lemma \ref{e1lemma} to show that after deletion of up to $k$ points, $\exists \vp' \in B(\vq, r)$ such that $g_j(\vp') = g_j(\vq)$ for some $j \in \{1, \cdots,L\}$. 
     We know that the original data follow a Poisson distribution, so if we say that the number of points in a ball of radius r surrounding a query is a Poisson random variable \textbf{K} with mean $m$, we can use Lemma \ref{poissonthinning} to say that the number of retained points follows a Poisson distribution with mean $mp$, where $p = n^{-\eta}$ is the probability that every point is retained independently. Denote this distribution by $\mathbf{S}$.
    \begin{align*}
        P(\text{At most $d$ points lie in an r-ball surrounding the query point})
        &= P(\mathbf{S} \leq d) \\
        &\leq e^{d - mp + d\ln{\frac{mp}{d}}}
    \end{align*}

This implies that the probability of having at least $d+1$ points close to the query is $(1-e^ {d - mp + d\ln{\frac{mp}{d}}})$. 

Now, given that there exists $\vp' \in B(\vq,r)$ even on deleting $k$ points in the worst case, we can lower bound the probability that $g_j(\vp') = g_j(\vq)$ for some $j \in \{1, \dots, L\}$ as follows:
    \begin{equation}
        \label{pcollision}
         P(g_j(\vp')=g_j(\vq))\geq p_1^k \geq p_1^{\log_{1/p_2} n + 1} = p_1 n^{-\log_{1/p_2} (1/p_1)} = p_1 n^{-\rho}
    \end{equation}
So, the worst case probability of success of event $\mathbf{E_1}$ is:
    \begin{equation*}
        \begin{split}
            P(\mathbf{E_1})&\geq (1 - e^{d-mp+d\ln{\frac{mp}{d}}})(1 - (1 - p_1 n^{-\rho})^L)\text{ (using equation \ref{pcollision})}
        \end{split}
    \end{equation*}
    Now, similar to Lemma \ref{e1lemma}, we can set $L = n^{\rho}/ p_1$ to obtain $P_1 = (1 - e^{d-mp+d\ln{\frac{mp}{d}}})(1 - \frac{1}{e})$. So now, using Lemma \ref{e2lemma} and the success probability derived above, we can say that for an appropriate choice of $k$ and $L$, the guarantees of our data structure hold with failure probability $\frac{1}{3n^{\eta}} + \frac{1}{e} + e^{d - mp + d\ln{\frac{mp}{d}}}(1 - \frac{1}{e})$.
\end{proof}

\section{Sliding-Window Approximate Kernel Density Estimation (\textsf{A-KDE})}\label{sec:akde}
In \cite{coleman2020sub}, the authors propose \textsc{RACE}, an efficient sketching technique for kernel density estimation on high-dimensional streaming data. We have seen that we can get low relative errors using a larger number of repetitions in \textsc{RACE}. We propose Sliding window \textsf{AKDE}, in short \textsf{SW-AKDE}, which uses a modified  \textsc{RACE} structure to make it suitable for a sliding window model by using \textsc{EH} \cite{datar2002maintaining}. We give bounds on the number of repetitions \ie, the number of rows, to obtain a good estimate of the \textsf{KDE} with high probability.
\subsection{The Algorithm} In \textsc{RACE}, we increment $\mA[i,h_i(\vx)]$ for every new element $\vx$ coming from dataset $\sD$. In the sliding window model, we are interested in the last $N$ elements, assuming that we get an element every time step. Hence, we have to find the number of times a counter has been incremented in the last $N$ time steps. This problem is similar to the \textsc{Basic Counting} problem in \cite{datar2002maintaining}, where the incoming stream of data is 1 if the counter is incremented at a time instant, otherwise 0. We will store an \textsc{EH} for each cell of \textsc{RACE}. On querying the \textsc{EH}, we will get an estimate of the count in a particular cell. In the \textsc{RACE} sketch, the estimator is computed using the median of means procedure. For \textsf{SW-AKDE}, we will take the average of \textsc{ACE} estimates over L independent repetitions. The \textsf{SW-AKDE} algorithm is given in \ref{KDE_algo}.

\begin{figure}[ht]
    \centering
    \includegraphics[height=0.15\paperheight,keepaspectratio]{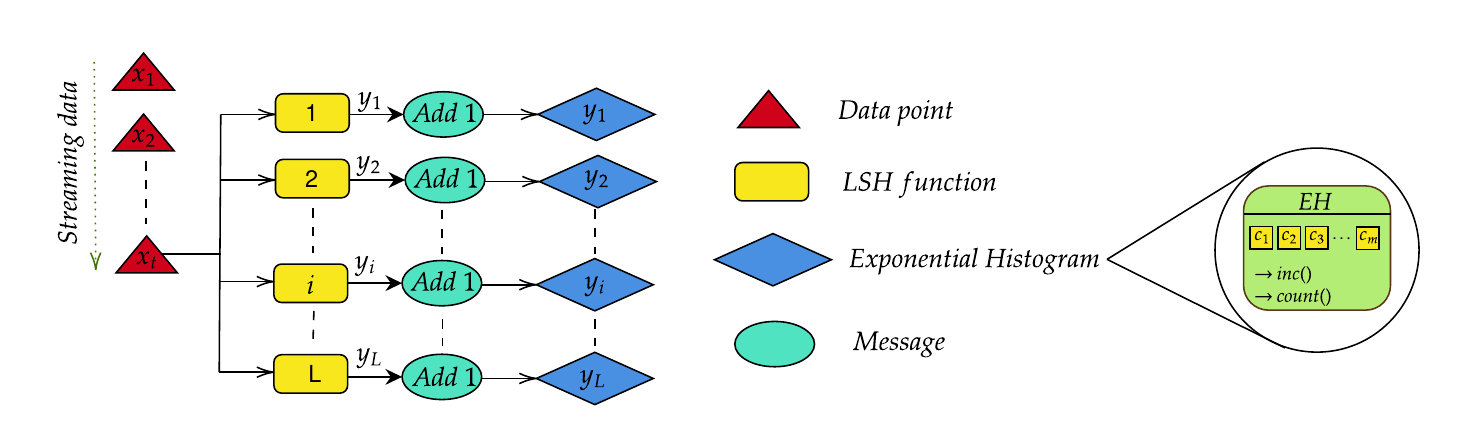}
    \caption{Update mechanism of Algorithm~\ref{KDE_algo}, where \(y_i\) is the output of the \textsc{LSH} function \(i\) on \(x_t\). For the \(i^{th}\) row, we add 1 to the \textsc{EH} at index \(y_i\)}
    \label{kde:sketch1}
\end{figure}

\begin{figure}[ht]
\centering
\includegraphics[height=0.15\paperheight,keepaspectratio]{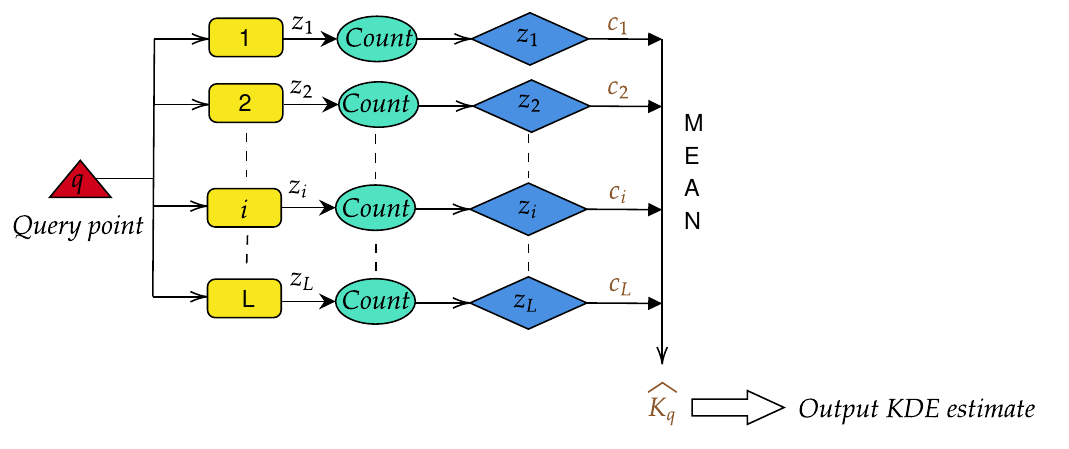}
\caption{Query mechanism of Algorithm~\ref{KDE_algo} where \(z_i\) is the output of the \textsc{LSH} function \(i\) on \(q\). For the \(i^{th}\) row, \(c_i\)  is the count 
estimate of the \textsc{EH} at index \(z_i\). }
\label{kde:sketch2}
\end{figure}

\begin{algorithm}[h]
\DontPrintSemicolon
\SetAlgoNlRelativeSize{-1}
\caption{\textsf{SW-AKDE}: \textsf{AKDE} sketch construction and querying for sliding window}
\label{KDE_algo}

\textit{Preprocessing:}\;
\KwIn{Data set $\mathcal{D}$, \textsc{LSH} family $\mathcal{H}$ of range $W$, parameters $p$ and $L$}
\KwOut{RACE sketch $A$ of size $\mathbb{Z}^{L\times W^p}$}

Initialize $L$ independent hash functions $\{h_1, h_2, \dots, h_L\}$, where each $h_i$ is constructed by concatenating $p$ independent hashes from $\mathcal{H}$\;
$A \gets \textsc{empty}$ \tcp*[f]{\textsc{RACE} structure}\;
$t \gets 0$ \tcp*[f]{timestamp initialized to 0}\;

\ForEach{$p \in \mathcal{D}$}{
  \For{$i = 1$ \KwTo $L$}{
    $j \gets h_i(p)$\;
    \eIf{$A[i,j]$ is empty}{
      Create an Exponential Histogram at $A[i,j]$ with timestamp $t$\;
    }{
      Add a 1 to the Exponential Histogram at $A[i,j]$ with timestamp $t$\;
    }
  }
  $t \gets t + 1$\;
}
\BlankLine
\textit{Query processing:}\;
\KwIn{RACE sketch $A$, query $\mathbf{q}$, hash functions $\{h_1, h_2, \dots, h_L\}$ initialized in preprocessing}
\KwOut{Approximate \textsf{KDE} estimate $y$}

$y \gets 0$ \tcp*[f]{initialize the \textsf{KDE} estimate to 0}\;
\For{$i = 1$ \KwTo $L$}{
  \If{$A[i, h_i(\mathbf{q})]$ is not empty}{
    $c \gets$ estimate of count in the Exponential Histogram at $A[i, h_i(\mathbf{q})]$\;
    $y \gets y + c$\;
  }
}
$y \gets y / L$ \tcp*[f]{compute the approximate \textsf{KDE}}\;
\Return $y$\;
\end{algorithm}

\subsection{Correctness}
The algorithm is illustrated in Figs.~\ref{kde:sketch1} and~\ref{kde:sketch2}.
Consider the \textsc{ACE} Estimator. We know that if we have the actual counts (by using a counter instead of \textsc{EH}), then $X = \mA[h(\vq)]$ is an unbiased estimator for the Kernel Density with bounded variance(\ref{ace_var}). So, $\textbf{K}=\mathbb{E}[X]$ is the Kernel Density estimate. We will now demonstrate that, by using \textsc{EH}, the new estimator for a single \textsc{ACE} instance approximates $\textbf{K}$ with a certain level of accuracy.
\begin{lemma} \label{lem:4.1}
Let Y be the new estimator obtained from querying the \textsc{EH} at $\mA[h(\vq)]$. Then, $\mathbb{E}[Y]\leq (1+\epsilon')\textbf{K}$.\end{lemma}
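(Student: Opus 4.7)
The plan is to reduce this to a short two-line argument by separating the two independent sources of error: the randomness of the LSH (which is captured by the ACE estimator $X=\mA[h(\vq)]$) and the deterministic $(1+\epsilon')$-approximation that the exponential histogram provides for the true count stored at each cell.

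First I would recall from the exponential histogram analysis of \cite{datar2002maintaining}, stated in the Preliminaries, that if $X$ is the true number of insertions into a given cell within the active window, then the EH query returns a value $Y$ satisfying $Y \leq (1+\epsilon') X$ with probability $1$ (the EH error guarantee is deterministic given the stream, once the invariants on bucket sizes are maintained). In particular, this bound holds pointwise over every realization of the hash function and every stream history, i.e.
\[
Y \;\leq\; (1+\epsilon')\, X.
\]

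Next I would take expectation on both sides. The only source of randomness that matters for the statement is the choice of the LSH function $h$, since the EH bound is deterministic. By the ACE estimator theorem quoted just before Theorem~\ref{ace_var}, $\mathbb{E}[X]=\mathbb{E}[\mA[h(\vq)]]=\textbf{K}$. Hence
\[
\mathbb{E}[Y] \;\leq\; (1+\epsilon')\,\mathbb{E}[X] \;=\; (1+\epsilon')\,\textbf{K},
\]
which is exactly the stated claim.

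The only subtle point — and the one place I would slow down in writing the proof — is justifying why the EH approximation can be treated as a pointwise, one-sided bound rather than being folded inside the expectation. This follows because the EH invariants guarantee $|Y-X| \leq \epsilon' X$ for every stream, and in particular $Y \leq (1+\epsilon') X$; the randomness of $h$ only affects which cell we query, not whether the EH bound holds. Once this is stated cleanly, linearity of expectation and the ACE identity finish the proof immediately; there is no concentration or union-bound step required here, so I do not anticipate any real obstacle.
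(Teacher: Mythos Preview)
Your proof is correct and follows essentially the same approach as the paper: use the deterministic EH guarantee $|Y-X|\le \epsilon' X$ pointwise, then take expectations and invoke $\mathbb{E}[X]=\textbf{K}$ from the ACE estimator. The paper passes through the intermediate step $|\mathbb{E}[Y]-\mathbb{E}[X]|\le \mathbb{E}[|Y-X|]\le \epsilon'\textbf{K}$ before extracting the upper bound, whereas you go directly from $Y\le (1+\epsilon')X$ to the conclusion, but this is only a cosmetic streamlining of the same argument.
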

\begin{proof}
    Suppose the relative error of the estimate from the \textsc{EH} algorithm is $\epsilon'$. So,
    \begin{equation}\label{eqnKD1}
    \begin{split}
        &|Y-X|\leq \epsilon'X \\
        &\text{ Taking expectation on both sides,} \\
        \implies&\mathbb{E}[|Y-X|]\leq \epsilon'\mathbb{E}[X]=\epsilon'\textbf{K} \text{  (since $\textbf{K}=\mathbb{E}[X]$)} \\
        \implies& |\mathbb{E}[Y]-\mathbb{E}[X]|\leq\mathbb{E}[|Y-X|]\leq \epsilon'\textbf{K}\\
        \implies& |\mathbb{E}[Y]-\textbf{K}|\leq \epsilon'\textbf{K} \\
        \implies& \mathbb{E}[Y]\leq (1+\epsilon')\textbf{K}\\
    \end{split}
\end{equation}
\end{proof}

We use $r$ independent \textsc{ACE} instances to estimate the kernel density. Hence, the \textsf{KDE} estimator in the current setting is,
\[\hat{Y} = \frac{1}{r}\sum_{i=1}^rY_i\quad \text{where $Y_i$ is the \textsc{EH} estimate for $\mA[h_i(\vq)]$} \] The expectation of $\hat{Y}$ is $\mathbb{E}Y_i$ and the variance of $\hat{Y}$ is $\frac{1}{r}Var(Y_i)$.
Now we will show the bounds of the estimator $\hat Y$ and derive the necessary bounds on $r$.
\begin{lemma}\label{yhatbound}
        $|\hat{Y} - \mathbb{E}\hat{Y}| < \epsilon ' \mathbb{E}[\hat{Y}]$ holds with probability $1-\delta$ if:
    \[ r \geq \frac{2\max\{X_i\}^2}{(1 + \epsilon')^2\textbf{K}^2}\log\left(\frac{2}{\delta}\right)\]
    where $\epsilon'$ is the relative error of \textsc{EH}, $r$ is the number of repetitions of \textsc{ACE} (or the number of rows in the \textsf{SW-AKDE} sketch).
\end{lemma}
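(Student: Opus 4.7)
The plan is to treat $\hat Y = \tfrac{1}{r}\sum_{i=1}^{r} Y_i$ as the sample mean of $r$ independent bounded random variables and apply Hoeffding's inequality. Independence is free: the $r$ rows of \textsf{SW-AKDE} use mutually independent \textsc{LSH} functions, so the \textsc{EH} estimates $Y_1,\dots,Y_r$ returned at the query are independent random variables.

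The first step is to pin down a deterministic range for each $Y_i$. From Lemma~\ref{lem:4.1}, the relative-error guarantee of the exponential histogram gives $|Y_i - X_i| \le \epsilon' X_i$, whence $Y_i \in [\,0,\,(1+\epsilon')X_i\,] \subseteq [\,0,\,(1+\epsilon')\max\{X_i\}\,]$. This supplies the worst-case range $M = (1+\epsilon')\max\{X_i\}$ that Hoeffding needs. The second step is to invoke Hoeffding's inequality
\[
\Pr\!\bigl(|\hat Y - \mathbb{E}\hat Y| \ge t\bigr) \;\le\; 2\exp\!\left(-\frac{2 r t^{2}}{M^{2}}\right),
\]
choose the deviation threshold $t = \epsilon' \mathbb{E}[\hat Y]$ suggested by the statement, and require the right-hand side to be at most $\delta$. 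Solving for $r$ gives a lower bound of the form $r \ge \tfrac{M^{2}}{2 t^{2}}\log(2/\delta)$.

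The third step is to re-express this bound in terms of $\mathbf{K}$ rather than $\mathbb{E}[\hat Y]$, since $\mathbf{K}$ is the population quantity of interest and $\mathbb{E}[\hat Y]$ is not directly accessible. Here Lemma~\ref{lem:4.1} comes in a second time: it ties $\mathbb{E}[\hat Y]$ to $(1+\epsilon')\mathbf{K}$, which when substituted into the denominator $t^{2} = (\epsilon')^{2}(\mathbb{E}\hat Y)^{2}$ produces the stated $(1+\epsilon')^{2}\mathbf{K}^{2}$ factor; the numerator $M^{2}$ contributes the $\max\{X_i\}^{2}$ factor, and the remaining constants collapse to the displayed $2$ and $\log(2/\delta)$.

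The main obstacle is bookkeeping rather than any conceptual difficulty: the bound on $Y_i$ involves $(1+\epsilon')$, the target deviation involves $\epsilon'$, and the substitution $\mathbb{E}[\hat Y] \mapsto (1\pm\epsilon')\mathbf{K}$ again contributes $(1+\epsilon')$ factors, so one has to line up these three occurrences of $\epsilon'$ carefully to recover the exact form in the statement. A secondary subtlety is that $\max\{X_i\}$ is itself a random quantity depending on the hash functions; the cleanest reading is to treat it as a deterministic worst-case (data-dependent) upper bound on the bucket counts queried across the $r$ rows, so that Hoeffding's range assumption applies uniformly over the randomness of the sketch.
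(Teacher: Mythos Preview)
Your high-level plan is exactly the paper's: bound the range of each $Y_i$, apply Hoeffding with threshold $t=\epsilon'\mathbb{E}[\hat Y]$, then use Lemma~\ref{lem:4.1} to pass from $\mathbb{E}[\hat Y]$ to $\mathbf{K}$. The gap is in your first step. You take $Y_i\in[0,(1+\epsilon')\max\{X_i\}]$, giving a Hoeffding width $M=(1+\epsilon')\max\{X_i\}$. With this choice the bound you obtain is
\[
r \;\ge\; \frac{M^{2}}{2t^{2}}\log\frac{2}{\delta}
\;=\; \frac{(1+\epsilon')^{2}\max\{X_i\}^{2}}{2(\epsilon')^{2}(\mathbb{E}\hat Y)^{2}}\log\frac{2}{\delta},
\]
and after the Lemma~\ref{lem:4.1} substitution you are left with a factor $1/(2(\epsilon')^{2})$ in front, not the $2/(1+\epsilon')^{2}$ in the statement. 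The constants do \emph{not} collapse; for small $\epsilon'$ your bound is worse by a factor of roughly $1/(4\epsilon'^{2})$.

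The paper instead uses the \emph{two-sided} \textsc{EH} guarantee $X_i(1-\epsilon')\le Y_i\le X_i(1+\epsilon')$, so the Hoeffding width is $b_i-a_i=2\epsilon' X_i$. That extra $\epsilon'$ in the width is precisely what cancels the $\epsilon'$ in the deviation threshold $t=\epsilon'\mathbb{E}[\hat Y]$:
\[
\Pr\bigl(|\hat Y-\mathbb{E}\hat Y|>\epsilon'\mathbb{E}\hat Y\bigr)
\le 2\exp\!\left(-\frac{2r^{2}(\epsilon')^{2}(\mathbb{E}\hat Y)^{2}}{\sum_i (2\epsilon' X_i)^{2}}\right)
\le 2\exp\!\left(-\frac{r(\mathbb{E}\hat Y)^{2}}{2\max\{X_i\}^{2}}\right),
\]
from which the stated bound follows. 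So the missing idea is not bookkeeping but the observation that the \textsc{EH} pins $Y_i$ to a narrow interval of width proportional to $\epsilon'$, and this is essential for the displayed constants. (A side remark: both your sketch and the paper replace $\mathbb{E}\hat Y$ in the denominator by the \emph{upper} bound $(1+\epsilon')\mathbf{K}$, which goes the wrong way for a sufficient condition; the clean fix is to use the lower bound $(1-\epsilon')\mathbf{K}$ from the same two-sided estimate in Lemma~\ref{lem:4.1}.)
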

\begin{proof}
    From \ref{eqnKD1}, $ |Y_i-X_i|\leq \epsilon'X_i\implies X_i(1 - \epsilon')\leq Y_i \leq X_i(1+\epsilon') \forall i\in\{1,2,\cdots,r\}$. Here, $X_i$ refers to the actual count in a cell in the last $N$ time steps, if we had used a counter instead of an \textsc{EH} for the $i^{th}$ hash function. 
    
    Note that, in order to estimate the lower bound for $r$, we need the maximum of $X_i$'s for $i\in[r]$. 
    Therefore, we can only arrive at the solution using an iterative technique, which can be made faster through usual binary trick. 
    The procedure is illustrated in algorithm ~\ref{r_algo}. This implies that the \textit{for} loop in the subroutine repeats $\log R$ times, where $R$ is the solution. Hence, we incur a logarithmic factor in the running time.
    
    It follows from \textit{Hoeffding's inequality}, where we define $\epsilon$ as $\epsilon'\mathbb{E}[\hat{Y}]$,
\begin{align*}
    \mathbb{P}(|\hat{Y} - \mathbb{E}\hat{Y}| > \epsilon ' \mathbb{E}[\hat{Y}]) &\leq 2 \exp\left(-\frac{2r^2\epsilon^2}{\sum_{i=1}^r (b_i - a_i)^2}\right)  \\
    &\leq 2 \exp\left(-\frac{2\epsilon'^2\mathbb{E}[\hat{Y}]^2r^2}{\sum_{i=1}^r(2\epsilon'X_i)^2}\right) \text{ (using $\epsilon=\epsilon'\mathbb{E}[\hat{Y}]$)}\\
    &\leq 2 \exp\left(-\frac{2\epsilon'^2\mathbb{E}[\hat{Y}]^2r^2}{r(2\epsilon'\max\{X_i\})^2}\right) \\
    &\leq 2\exp\left(-\frac{r\mathbb{E}[Y_i]^2}{2\max\{X_i\}^2}\right) \text{ (using $\mathbb{E}[\hat{Y}]=\mathbb{E}[Y_i]$)}\\
\end{align*}
To bound this probability by $\delta$, we need : 
\begin{align*}
    &2\exp\left(-\frac{r\mathbb{E}[Y_i]^2}{2\max\{X_i\}^2}\right) \leq \delta \\
    &    \implies r \geq \frac{2\max\{X_i\}^2}{\mathbb{E}[Y_i]^2}\log\left(\frac{2}{\delta}\right) \\
    & \implies r \geq \frac{2\max\{X_i\}^2}{(1 + \epsilon')^2\textbf{K}^2}\log\left(\frac{2}{\delta}\right)\text{ (using Lemma ~\ref{lem:4.1})}
\end{align*}
Thus $|\hat{Y} - \mathbb{E}\hat{Y}| \leq \epsilon ' \mathbb{E}[\hat{Y}]$ holds with probability $1-\delta$ if $r$ satisfies the aforesaid bound.
\end{proof}
Now we will show that $\hat{Y}$ gives a multiplicative approximation of the \textsf{KDE} with probability $1-\delta$.

\begin{lemma}\label{kdeestimate}
    The estimator $\hat Y$ gives a $(1+\epsilon)$ approximation of the  Kernel density with probability $1-\delta$.
\end{lemma}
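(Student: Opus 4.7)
The plan is to chain the two prior lemmas via a triangle inequality and then rebalance the error parameters so that the final error is the user-requested $\epsilon$. Write $\textbf{K}$ for the true KDE value (\ie $\mathbb{E}[X_i]$). Note first that the proof of Lemma~\ref{lem:4.1} actually establishes the two-sided bound $|\mathbb{E}[Y_i]-\textbf{K}|\le \epsilon'\textbf{K}$, not just the upper bound stated; I will use this slightly stronger form, which follows verbatim from the intermediate line $|\mathbb{E}[Y]-\mathbb{E}[X]|\le \epsilon'\textbf{K}$ in \eqref{eqnKD1}. Since all $Y_i$ share the same expectation, $\mathbb{E}\hat{Y}=\mathbb{E}[Y_i]$, so the same two-sided bound transfers to $\hat Y$.

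Next I would split the target error using the triangle inequality
\[
|\hat Y - \textbf{K}| \le |\hat Y - \mathbb{E}\hat Y| + |\mathbb{E}\hat Y - \textbf{K}|.
\]
By Lemma~\ref{yhatbound}, the first term is at most $\epsilon'\,\mathbb{E}\hat Y$ with probability at least $1-\delta$, provided $r$ satisfies the bound stated there. The second term is deterministically at most $\epsilon'\textbf{K}$ by the strengthened version of Lemma~\ref{lem:4.1}. Combining the two and using $\mathbb{E}\hat Y \le (1+\epsilon')\textbf{K}$ yields, with probability at least $1-\delta$,
\[
|\hat Y - \textbf{K}| \;\le\; \epsilon'(1+\epsilon')\textbf{K} + \epsilon'\textbf{K} \;=\; \epsilon'(2+\epsilon')\,\textbf{K}.
\]

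The final step is to pick $\epsilon'$ so that this composite error equals the target $\epsilon$. Solving the quadratic $\epsilon'(2+\epsilon')=\epsilon$ for the positive root gives $\epsilon' = \sqrt{1+\epsilon}-1$. Substituting this choice into Lemma~\ref{yhatbound} produces the multiplicative approximation $|\hat Y-\textbf{K}|\le \epsilon\,\textbf{K}$ with probability $1-\delta$, which is exactly the statement of Lemma~\ref{kdeestimate}. Reassuringly, this calibration recovers the $\frac{1}{\sqrt{1+\epsilon}-1}$ factor advertised in the abstract's sketch-size bound, since each exponential histogram inside \textsf{SW-AKDE} is instantiated with accuracy $\epsilon'=\sqrt{1+\epsilon}-1$ and therefore occupies $\mathcal{O}\!\left(\tfrac{1}{\sqrt{1+\epsilon}-1}\log^2 N\right)$ space.

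The main obstacle is conceptual rather than technical: one must be careful not to double-count the EH error. The EH relative error $\epsilon'$ enters twice---once as the bias of $\mathbb{E}\hat Y$ relative to $\textbf{K}$ (Lemma~\ref{lem:4.1}) and once as the concentration tolerance of $\hat Y$ around $\mathbb{E}\hat Y$ (Lemma~\ref{yhatbound})---and the rebalancing $\epsilon'=\sqrt{1+\epsilon}-1$ is precisely what absorbs both contributions into a single final $\epsilon$. The rest of the argument is a straightforward triangle-inequality bookkeeping step.
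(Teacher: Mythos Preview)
Your proof is correct and follows essentially the same route as the paper: triangle inequality on $|\hat Y-\textbf{K}|$, bound the two pieces by Lemma~\ref{yhatbound} and the two-sided form of \eqref{eqnKD1}, combine to get $(2\epsilon'+\epsilon'^2)\textbf{K}$, then set $\epsilon=2\epsilon'+\epsilon'^2$ (equivalently $\epsilon'=\sqrt{1+\epsilon}-1$). Your observation that the two-sided bound $|\mathbb{E}\hat Y-\textbf{K}|\le\epsilon'\textbf{K}$ is what is actually needed, and is already present in \eqref{eqnKD1}, matches exactly what the paper uses.
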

\begin{proof}
    Let the \textsf{KDE} be given by \textbf{K}. The estimator from the \textsf{SW-AKDE} algorithm is $\hat Y$. Then,
   \begin{align*}
    |\hat{Y} - \textbf{K}| &\leq |\hat{Y}
 - \mathbb{E}\hat{Y}| + |\mathbb{E}\hat{Y} - \textbf{K}| \quad\text{ (using triangle inequality)}\\
 &\leq \epsilon'\mathbb{E}\hat{Y} + \epsilon'\textbf{K} \quad\text{ (using lemma \ref{yhatbound} and penultimate inequality of equation \ref{eqnKD1})}\\
 &\leq \epsilon'(1 + \epsilon')\textbf{K} + \epsilon'\textbf{K}\quad \text{ (using last inequality of equation \ref{eqnKD1})}\\
 &\leq (2\epsilon'+\epsilon'^2)\textbf{K} \\
 &= \epsilon\textbf{K} \quad\text{ (substituting $\epsilon=2\epsilon'+\epsilon'^2$)}\\
 \implies |\hat Y-\textbf{K}|&\leq \epsilon\textbf{K}
\end{align*}
Note that the bound from Lemma \ref{yhatbound} holds with probability $1-\delta$. Hence, this result holds with probability $1-\delta$.
\end{proof}
\begin{algorithm}[ht]
\DontPrintSemicolon
\SetAlgoNlRelativeSize{-1}
\caption{Find optimal $r$ for \textsf{AKDE} sketch}
\label{r_algo}
\KwIn{relative error of \textsc{EH} $\epsilon'$, $\delta$, Data set $\mathcal{D}$, \textsc{LSH} family $\mathcal{H}$ of range $W$, parameters $p$}
\KwOut{Optimal number of rows $r$}

$r \gets r_0$ \tcp*[f]{Initialize $r$ to some value say 1}\;
\For{$i = 1,2,\cdots$}{
  $A \gets Preprocessing(\mathcal{D},\mathcal{H},p=1,L=r)$\tcp*[f]{call the preprocessing routine of algorithm~\ref{KDE_algo} by using counters instead of \textsc{EH} in the sketch update step}\;
  \eIf(\tcp*[f]{check stopping condition}){$r > \frac{2\max\{X_i\}^2}{(1 + \epsilon')^2\mathbf{K}^2}\log\left(\frac{2}{\delta}\right)$}{
    \textbf{break}\;
  }{
    $r\gets 2r$\tcp*[f]{Double the value of $r$}\;
  }
}
\KwRet{$r$}\;
\end{algorithm}
Let us compute the space requirement of the sketch proposed for our algorithm.
\begin{lemma}\label{race_space}
    The proposed \textsf{SW-AKDE} has space complexity $\mathcal{O}\left(RW \cdot \frac{1}{\sqrt{1+\epsilon} - 1} \log^2 N\right)$ where \(R\) is the number of rows, $W$ is the range of the hash function, $\epsilon$ is the relative error for \textsf{KDE}, $N$ is the window size.
\end{lemma}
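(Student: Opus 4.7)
The plan is to do a simple accounting argument. The sketch has $R$ rows, each indexed by a hash function whose composite range is $W$, so there are $R \cdot W$ cells in total. Each cell stores an Exponential Histogram (\textsc{EH}) maintaining the count of 1's over the last $N$ time steps with some relative error $\epsilon'$. From the preliminaries (the discussion of the \textsc{EH} invariants), a single \textsc{EH} occupies $\mathcal{O}\!\left(\tfrac{1}{\epsilon'} \log^2 N\right)$ bits. Multiplying through gives a total space of $\mathcal{O}\!\left(\tfrac{RW}{\epsilon'} \log^2 N\right)$.

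The only non-trivial step is to translate the per-cell error parameter $\epsilon'$ into the global \textsf{KDE} error parameter $\epsilon$. For this I would use the substitution already performed in the proof of Lemma~\ref{kdeestimate}, where $\epsilon = 2\epsilon' + \epsilon'^2 = (1+\epsilon')^2 - 1$. Solving this quadratic for $\epsilon'$ (taking the positive root since $\epsilon' \ge 0$) yields
\[
\epsilon' \;=\; \sqrt{1+\epsilon} - 1,
\]
so $\tfrac{1}{\epsilon'} = \tfrac{1}{\sqrt{1+\epsilon}-1}$. Substituting this into the per-cell bound and summing over all $R \cdot W$ cells yields the claimed
\[
\mathcal{O}\!\left(RW \cdot \frac{1}{\sqrt{1+\epsilon}-1} \log^2 N\right).
\]

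There is essentially no obstacle in this proof; it is a direct bookkeeping calculation once one observes that (i) the space of a single \textsc{EH} is additive across cells (the hash tables do not share structure), and (ii) the relationship between the per-cell parameter $\epsilon'$ and the end-to-end \textsf{KDE} parameter $\epsilon$ is the one already derived. The only minor subtlety worth spelling out is that the bound from the \textsc{EH} analysis is per cell and is independent of the particular bucket counts stored, so there is no need to bound the load of individual cells; the space bound holds uniformly regardless of the data distribution.
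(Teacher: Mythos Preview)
Your proposal is correct and follows essentially the same argument as the paper: count $RW$ cells, multiply by the per-cell \textsc{EH} cost $\mathcal{O}\!\left(\tfrac{1}{\epsilon'}\log^2 N\right)$, and convert $\epsilon'$ to $\epsilon$ via the relation $\epsilon = 2\epsilon' + \epsilon'^2$ from Lemma~\ref{kdeestimate}, giving $\epsilon' = \sqrt{1+\epsilon}-1$. Your added remarks on additivity and uniformity of the \textsc{EH} bound are sound but not needed beyond what the paper does.
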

\begin{proof}
    The number of cells in \textsf{SW-AKDE} sketch is $RW$. Each cell is represented by an \textsc{EH} of space complexity $\gO(\frac{1}{\epsilon'}\log ^2N)$ where $\epsilon'$ is the relative error of the \textsc{EH}. The relative error for \textsf{KDE} $\epsilon$ is related to $\epsilon'$ as (using lemma \ref{kdeestimate}):
    \[\epsilon=2\epsilon'+\epsilon'^2\implies\epsilon'=\sqrt{1+\epsilon}-1\]
    Hence, the total space requirement for \textsf{SW-AKDE} is:
    \[RW\cdot\gO(\frac{1}{\epsilon'}\log ^2N)=RW\cdot \gO(\frac{1}{\sqrt{1+\epsilon}-1}\log ^2N)=\gO\left(RW\frac{1}{\sqrt{1+\epsilon}-1}\log ^2N\right) \]
\end{proof}

Using the above lemmas, we can state the main theorem as follows
\begin{theorem}\label{kde_result}
    Suppose we are given an \textsc{LSH} function with range $W$. Then the proposed \textsf{SW-AKDE} sketch with 
\[
 R = \mathcal{O} \left( \frac{2\max\{X_i\}^2}{(1 + \epsilon)\textbf{K}^2}\log\left(\frac{2}{\delta}\right) \right)
\]

independent repetitions of the hash function provides a $1 \pm \epsilon$ multiplicative approximation to $\textbf{K}$ (which is the \textsf{KDE}) with probability $1 - \delta$, using space $\mathcal{O}\left(RW \cdot \frac{1}{\sqrt{1+\epsilon} - 1} \log^2 N\right)$ where N is the window size.
\end{theorem}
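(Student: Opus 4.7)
The plan is to assemble the three preceding lemmas (Lemma \ref{lem:4.1}, Lemma \ref{yhatbound}, Lemma \ref{kdeestimate}, Lemma \ref{race_space}) into a single statement. At this point the theorem is essentially a packaging result: the nontrivial probabilistic and variance analysis has already been carried out, and what remains is a careful bookkeeping of the two error parameters $\epsilon$ (the target KDE error) and $\epsilon'$ (the per-cell \textsc{EH} error), along with the sketch dimensions $R$, $W$, and $N$.

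First, I would invoke Lemma \ref{kdeestimate}, which shows that the estimator $\hat Y$ produced by the \textsf{SW-AKDE} sketch satisfies $|\hat Y - \mathbf{K}| \le \epsilon \mathbf{K}$ with probability at least $1-\delta$, provided the accuracy $\epsilon'$ of each cell's \textsc{EH} is linked to the KDE accuracy $\epsilon$ via the substitution $\epsilon = 2\epsilon' + \epsilon'^2$. Solving for $\epsilon'$ gives $\epsilon' = \sqrt{1+\epsilon}-1$, which is precisely the factor that appears in the space bound. This establishes the multiplicative $(1\pm\epsilon)$ approximation claim.

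Next, I would derive the required number of independent repetitions $R$ by quoting Lemma \ref{yhatbound}: to guarantee the concentration $|\hat Y - \mathbb{E}\hat Y| \le \epsilon' \mathbb{E}[\hat Y]$ with probability $1-\delta$ it suffices to take
\[
R \;\geq\; \frac{2\max\{X_i\}^2}{(1+\epsilon')^2 \mathbf{K}^2}\,\log\!\left(\tfrac{2}{\delta}\right).
\]
Since $(1+\epsilon')^2 = 1+\epsilon$, this simplifies to the bound on $R$ stated in the theorem. Combined with Lemma \ref{kdeestimate}, this closes the correctness half of the proof.

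Finally, the space bound follows directly from Lemma \ref{race_space}: each of the $RW$ cells of the sketch stores an \textsc{EH} of size $\mathcal{O}(\tfrac{1}{\epsilon'}\log^2 N) = \mathcal{O}(\tfrac{1}{\sqrt{1+\epsilon}-1}\log^2 N)$, giving the claimed $\mathcal{O}\bigl(RW \cdot \tfrac{1}{\sqrt{1+\epsilon}-1}\log^2 N\bigr)$ total. Since all three ingredients are already in hand, there is no genuine obstacle; the only subtlety worth flagging is making sure the change of variables $\epsilon \leftrightarrow \epsilon'$ is applied consistently in both the threshold for $R$ and the per-cell \textsc{EH} space, so that the final statement is expressed uniformly in the user-facing parameter $\epsilon$ rather than the internal parameter $\epsilon'$.
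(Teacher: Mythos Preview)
Your proposal is correct and matches the paper's approach: the paper simply states Theorem~\ref{kde_result} immediately after Lemmas~\ref{lem:4.1}--\ref{race_space} with the remark ``Using the above lemmas, we can state the main theorem as follows,'' offering no further proof. Your explicit observation that $(1+\epsilon')^2 = 1+\epsilon$ cleanly explains why the bound on $R$ in Lemma~\ref{yhatbound} becomes the bound stated in the theorem.
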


\subsection{Extension to batch queries} We can extend the result of Theorem ~\ref{kde_result} to the batch queries setting.
We define the dynamic streaming dataset where a batch of data points at a new timestamp $t$ is denoted by $\sX^{(t)}=\{\vx_i^{(t)} \in\mathbb{R}\}_{i=1}^{n_t}$. Let the batch size ($n_t$) be a constant, say $R$. For the sliding window setting, we will consider the last $N$ batches for the \textsf{KDE} estimation, instead of $N$ data points.\par
Our algorithm can be extended for this setting accordingly. We have to modify only the update step in the \textsc{EH}. The \textsc{EH} has to estimate the number of elements in the last $N$ batches which hash to $h_i(\vq)$. The maximum increment for an \textsc{EH} in a cell at a given time step is $R$. This will happen when all the elements in the current batch of size $R$ hash to the same \textsc{LSH} bucket. \cite{datar2002maintaining} show that the \textsc{EH} algorithm can be generalized for this problem using at most $(k/2+1)(\log(\frac{2NR}{k}+1)+1)$ buckets, where $k=\lceil\epsilon\rceil$. The memory requirement for each bucket is $\log N+\log(\log N+\log R)$ bits. Hence, we get the following corollary from Theorem \ref{kde_result}.
\begin{corollary}
      Suppose we are given an \textsc{LSH} function with range $W$. The data comes in batches of size $R$ at every time step. Then the proposed \textsf{SW-AKDE} sketch with 
\[
 r = \mathcal{O} \left( \frac{2\max\{X_i\}^2}{(1 + \epsilon)\textbf{K}^2}\log\left(\frac{2}{\delta}\right) \right)
\]

independent repetitions of the hash function provides a $1 \pm \epsilon$ multiplicative approximation to $\textbf{K}$ (which is the \textsf{KDE}) with probability $1 - \delta$, using space $\mathcal{O}\left(rW \cdot \frac{1}{\sqrt{1+\epsilon} - 1} \allowbreak(\log(2NR\sqrt{1+\epsilon})\allowbreak (\log N+\log(\log N+\log R))\right)$ where N is the window size.
\end{corollary}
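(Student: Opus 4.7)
The plan is to reduce the batch-query setting to the single-point setting already handled by Theorem~\ref{kde_result}, so that the probabilistic analysis carries over essentially unchanged, and then to redo the space accounting using the batched version of the Exponential Histogram.

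First, I would observe that at the algorithmic level the only change from the single-point streaming model is that at each time step the EH at cell $\mA[i, h_i(\vx)]$ can be incremented by up to $R$ (rather than by at most $1$), and the window now holds the last $N$ batches rather than the last $N$ singletons. Crucially, the estimator $\hat Y = \tfrac{1}{r}\sum_{i=1}^r Y_i$ still averages $r$ independent \textsc{ACE} rows, and within each row the generalized EH continues to provide a $(1 \pm \epsilon')$ relative-error estimate $Y_i$ of the true in-window bucket count $X_i$. Hence Lemma~\ref{lem:4.1}, Lemma~\ref{yhatbound}, and Lemma~\ref{kdeestimate} apply verbatim, since their proofs only use $|Y_i - X_i| \le \epsilon' X_i$ together with independence across the $r$ rows. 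This gives the stated bound on $r$ and the $(1 \pm \epsilon)$ multiplicative approximation with probability $1-\delta$, after the usual substitution $\epsilon = 2\epsilon' + \epsilon'^2$, i.e.\ $\epsilon' = \sqrt{1+\epsilon} - 1$.

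Next, I would redo the space bound for a single EH under batched updates. Using the generalization of \cite{datar2002maintaining} to per-step increments of size at most $R$, each EH uses at most $(k/2+1)\bigl(\log(2NR/k+1)+1\bigr)$ buckets with $k = \lceil 1/\epsilon' \rceil$, because the total in-window count is now at most $NR$. Each bucket stores a timestamp ($\log N$ bits) and a bucket size, which ranges over powers of $2$ up to $NR$ and thus needs $\log(\log N + \log R)$ bits. Substituting $1/\epsilon' = 1/(\sqrt{1+\epsilon}-1)$, the per-cell cost becomes
\[
\gO\!\left(\tfrac{1}{\sqrt{1+\epsilon}-1}\,\log\!\bigl(2NR\sqrt{1+\epsilon}\bigr)\,\bigl(\log N + \log(\log N + \log R)\bigr)\right),
\]
and multiplying by the $rW$ cells of the sketch gives the claimed total space.

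The main obstacle is really only the EH accounting under batched increments: one has to check that the two invariants of the standard EH continue to hold when each arriving $1$ is replaced by a value in $\{0,1,\dots,R\}$, yielding the bucket-count bound above. This is the direct generalization established in \cite{datar2002maintaining}, and once that is in hand the corollary is a mechanical assembly of Theorem~\ref{kde_result} with the generalized EH space bound, with no change to the concentration argument.
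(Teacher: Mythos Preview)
Your proposal is correct and mirrors the paper's own argument: the paper also observes that only the EH update step changes under batched arrivals, invokes the generalized Exponential Histogram of \cite{datar2002maintaining} (with per-step increments up to $R$, yielding $(k/2+1)(\log(2NR/k+1)+1)$ buckets of size $\log N + \log(\log N+\log R)$ each), and then derives the corollary directly from Theorem~\ref{kde_result} with the substitution $\epsilon' = \sqrt{1+\epsilon}-1$. Your write-up is in fact slightly more explicit than the paper's brief discussion, but the approach is identical.
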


\section{Experiments}\label{sec:exp}

We now present the empirical evaluation of our proposed sketching algorithms to validate their theoretical guarantees and to assess their practical performance across standard benchmark datasets\footnote{Our codes are available in \href{https://github.com/VedTheKnight/Streaming-ANN-and-Sliding-Window-KDE.git}{https://github.com/VedTheKnight/Streaming-ANN-and-Sliding-Window-KDE.git}}.

\subsection{Experiments for \textsf{ANN}} We evaluate the efficacy of our streaming \textsf{ANN} approach on standard benchmarks, focusing on the trade-offs between sampling aggressiveness (parameter $\eta$) and the approximate recall/accuracy of $(c,r)$-\textsf{ANN} queries. We also investigate the interplay between $\epsilon$ and $\eta$, demonstrating that for sufficiently large $\epsilon$, sub-linear sketch sizes are attainable with $\eta > \rho$. Finally, we assess the query time performance of our scheme in comparison to the baseline to understand its computational efficiency.

\emph{Datasets.} Experiments were conducted on two standard \textsf{ANN} benchmarks \cite{aumuller2020ann}: \texttt{sift1m} \cite{jegou2011product} (1M vectors, 128-dimensions) and \texttt{fashion-mnist} \cite{xiao2017fashion} (60,000 images, 784-dimensions), and a synthetic dataset\footnote{A synthetic dataset generated by sampling points, uniformly randomly from a Poisson point process (PPP).}: \texttt{syn-32} (100,000 points, 32-dimensions).

\emph{Implementation.} All data structures were implemented in Python, assuming \texttt{float32} vectors. Compression is measured relative to $N \times d \times 4/1024^2$ MB. No additional memory optimizations were applied. We used the $p$-stable scheme described in~\cite{datar2004locality} for hashing.

\emph{Baseline.} We compare against Johnson-Lindenstrauss (JL) projection \cite{johnson1984extensions}, the only known strict one-pass solution for $(c,r)$-\textsf{ANN}.

\emph{Experimental Setup.} We performed three experiments to verify the effectiveness of our scheme: 
\begin{enumerate}
    \item \textbf{Comparison with JL:} We compared our method and the JL baseline by sweeping over $\epsilon=0.5$ to $1$ and adjusting compression rates via $k$ (JL) and $\eta$ (ours). Each run stored 50,000 points and issued 5,000 queries with $r=0.5$. Metrics included approximate recall@$50$, $(c,r)$-\textsf{ANN} accuracy, and memory usage.
    \item \textbf{Sketch Size Scaling:} Using the \texttt{sift1m} dataset, we fixed $\epsilon=0.5$ and varied $\eta$ ($0.2$ to $0.8$) and dataset size $N$ ($1{,}000$ to $160{,}000$), measuring sketch size.
    \item \textbf{Query Time Analysis:} We evaluated the query-time performance of both \textsf{JL} and our \textsf{S-ANN} scheme by fixing $\epsilon=0.5$ and storing $10{,}000$ points with $100$ query points. For \textsf{JL}, we varied the projection dimension $k$, while for \textsf{S-ANN}, we varied the sampling parameter $\eta$.
    For each configuration, we measured the average recall and query throughput (queries per second). 
\end{enumerate}

\paragraph{Discussion.} The median difference plots \footnote{Median difference is the median value of the difference in the respective metric (approximate recall/accuracy) as we vary compression rates. So a positive median difference corresponds to our scheme consistently out-performing the baseline.} in Figure \ref{fig:sub3-sann} and Figure \ref{fig:sub3-fm} show that \textsf{S-ANN} outperforms \textsf{JL} on both metrics beyond certain values of $\epsilon$. We show a more detailed comparison of approximate recall/accuracy vs compression rate for both datasets in Figure \ref{fig:exp1-main}. Figure \ref{fig:memory-n} shows us that for a fixed choice of $\epsilon$, we can obtain sub-linear sketches for $\eta \geq 0.5$. Putting these experiments together, we can see that it is possible to attain sublinear sketches for Problem \ref{prob:ann} for an appropriate choice of $\epsilon$ with good performance. 

Figure \ref{fig:qps} compares recall and query throughput for the \textsf{JL} baseline (left column) and our \textsf{S-ANN} scheme (right column) across three datasets (rows). For a fixed workload, the figure shows that increasing \textsf{JL}'s projection dimension $k$, steadily improves recall but offers little to no throughput benefit. Similarly for \textsf{S-ANN}, lowering the sampling rate $\eta$ improves recall but doesn't convincingly benefit the query throughput, just as one would expect. However the throughput for our scheme is significantly superior to that for the baseline for all datasets over all choice of parameters for this fixed workload.

\begin{figure}[ht]
\centering
\includegraphics[width=0.4\linewidth]{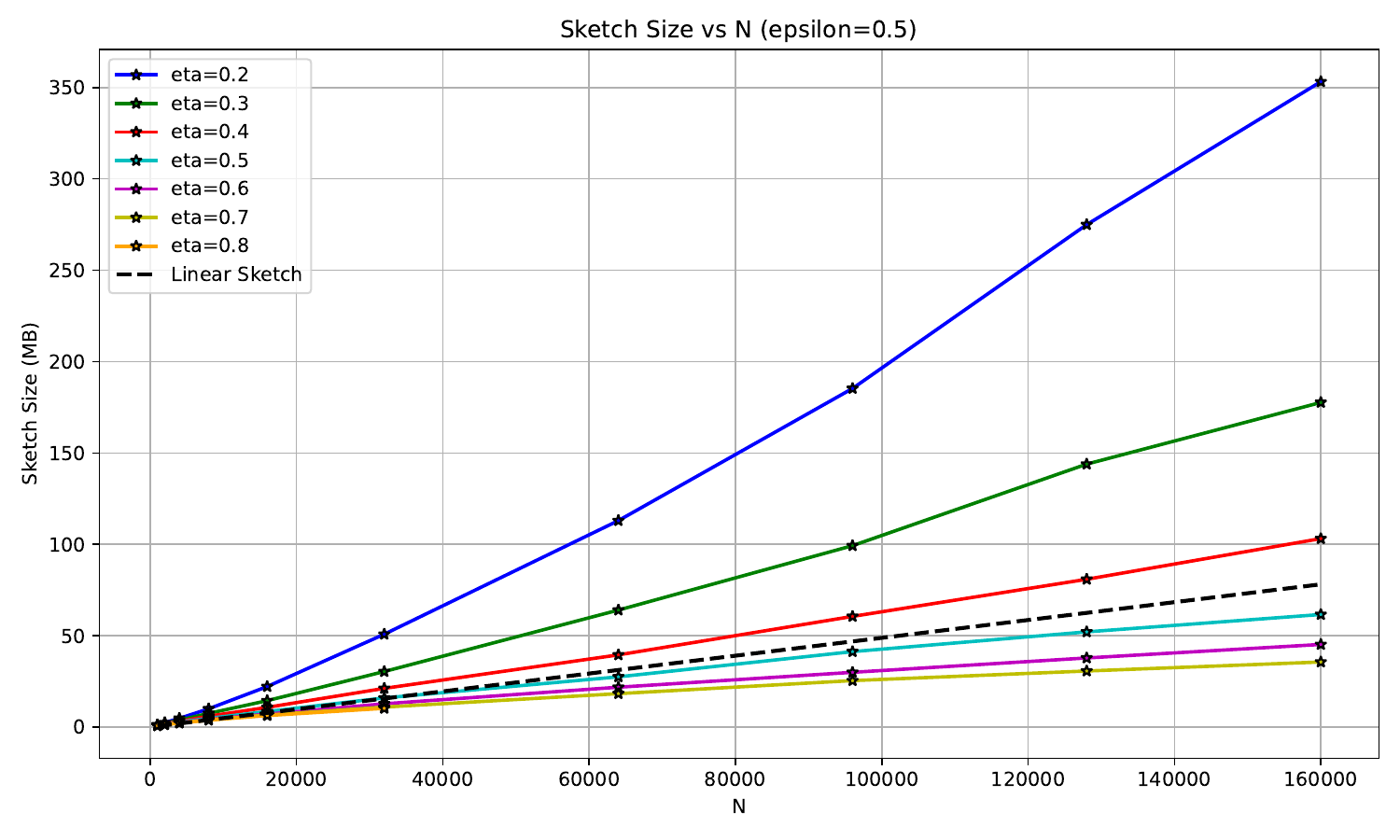}
\caption{Memory requirements scale with stream size $N$ for fixed $\epsilon=0.5$ for the \texttt{sift1m} dataset.}
\label{fig:memory-n}
\end{figure}
\begin{figure}[H]
\centering
\begin{subfigure}{.48\textwidth}
  \centering
  \includegraphics[height=0.14\paperheight,width=.85\linewidth]{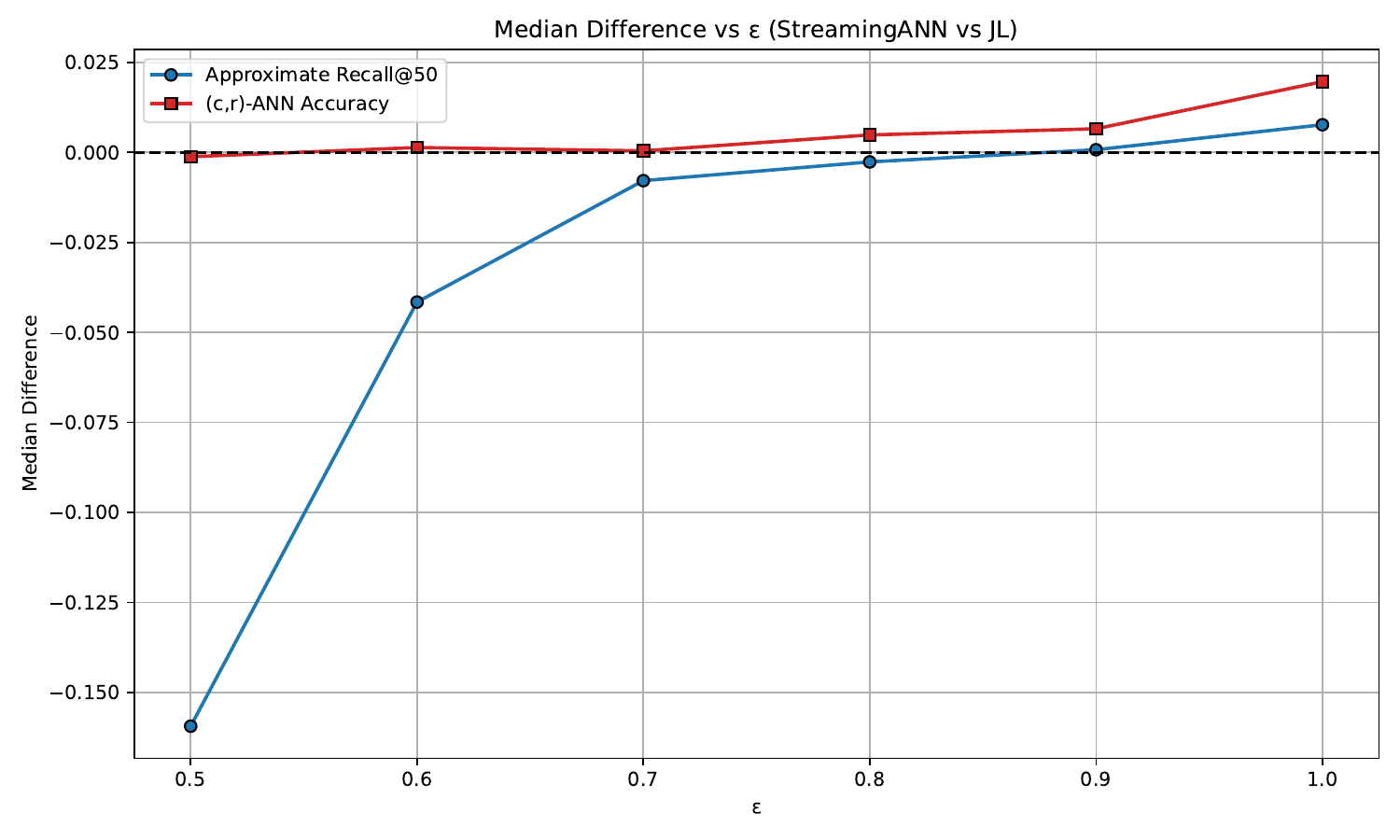}
  \caption{}
  \label{fig:sub3-sann}
\end{subfigure}%
\hfill
\begin{subfigure}{.48\textwidth}
  \centering
  \includegraphics[height=0.14\paperheight,width=.85\linewidth]{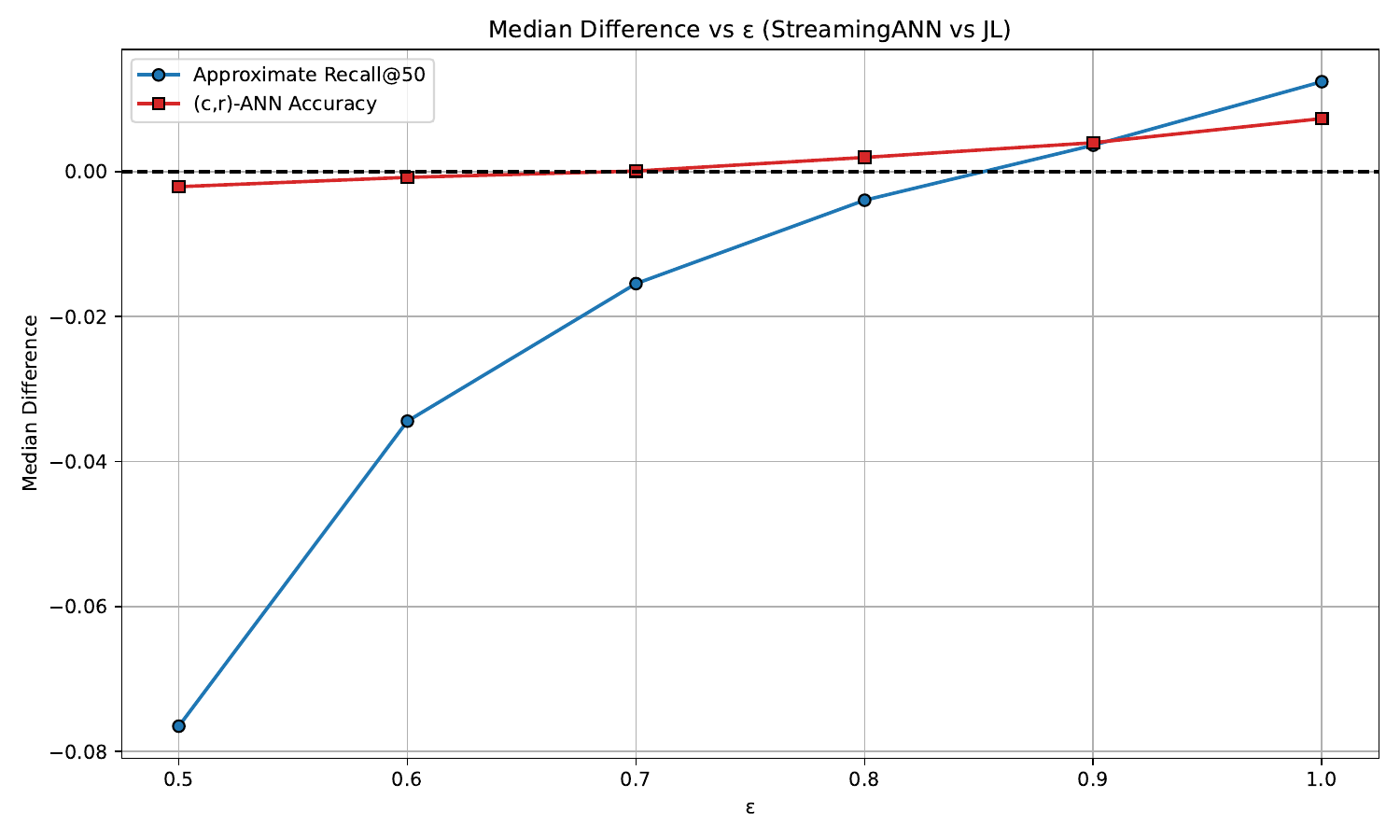}
  \caption{}
  \label{fig:sub3-fm}
\end{subfigure}

\caption{Median differences in Approximate Recall and $(c,r)$-\textsf{ANN} Accuracy over varying $\epsilon$ for \texttt{sift1m} (left) and \texttt{fashion-mnist} (right).}
\label{fig:exp1-summary}
\end{figure}

\begin{figure}[ht]
\centering

\begin{subfigure}{.48\textwidth}
  \centering
  \includegraphics[width=.95\linewidth]{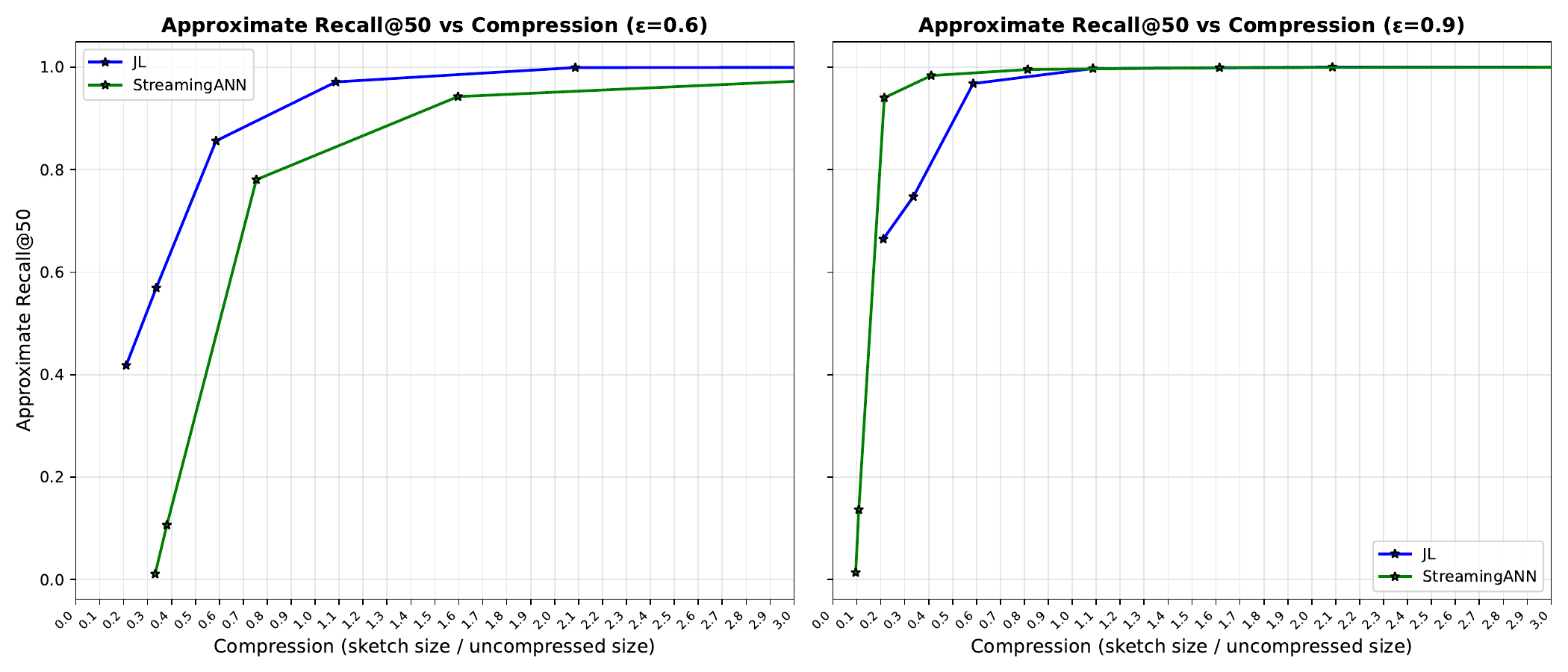}
  \caption{}
  \label{fig:sub1-sann}
\end{subfigure}%
\hfill
\begin{subfigure}{.48\textwidth}
  \centering
  \includegraphics[width=.95\linewidth]{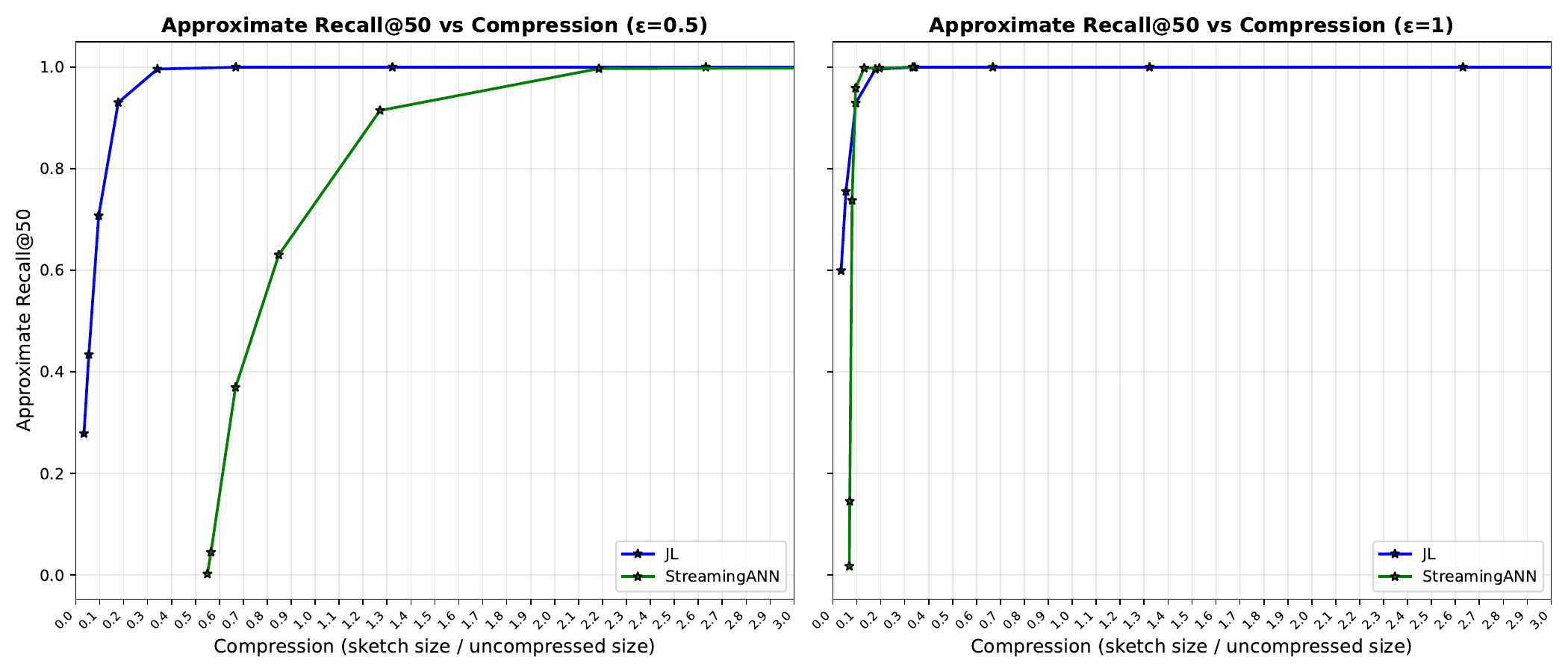}
  \caption{}
  \label{fig:sub1-fm}
\end{subfigure}

\begin{subfigure}{.48\textwidth}
  \centering
  \includegraphics[width=.95\linewidth]{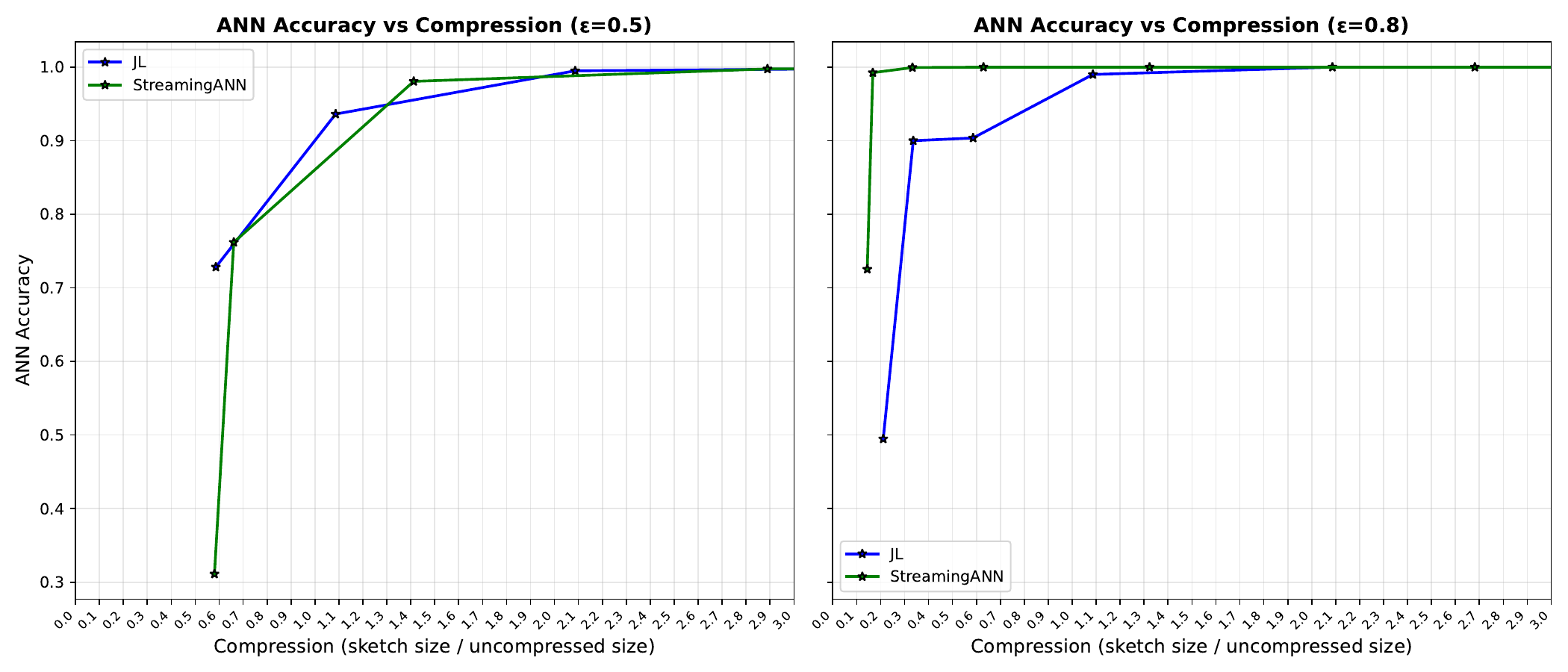}
  \caption{}
  \label{fig:sub2-sann}
\end{subfigure}%
\hfill
\begin{subfigure}{.48\textwidth}
  \centering
  \includegraphics[width=.95\linewidth]{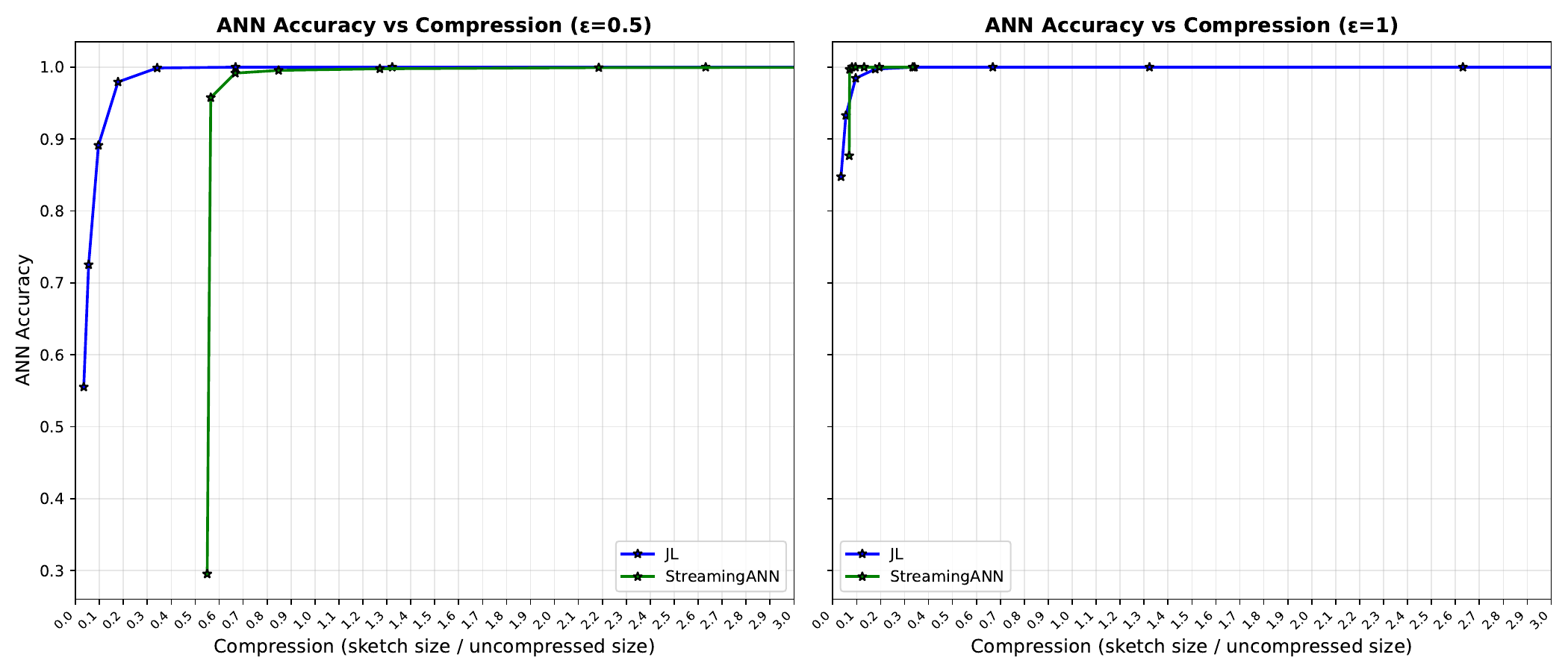}
  \caption{}
  \label{fig:sub2-fm}
\end{subfigure}

\caption{Effect of $\epsilon$ on retrieval performance: Left column shows results on \texttt{sift1m}, Right column shows results on \texttt{fashion-mnist}. (a) and (b) show Recall vs. Compression Rate for two values of $\epsilon$, and (c) and (d) show $(c,r)$-\textsf{ANN} Accuracy vs. Compression Rate for the same settings.}
\label{fig:exp1-main}
\end{figure}



\begin{figure}[H]
\centering
\includegraphics[width=0.9\linewidth]{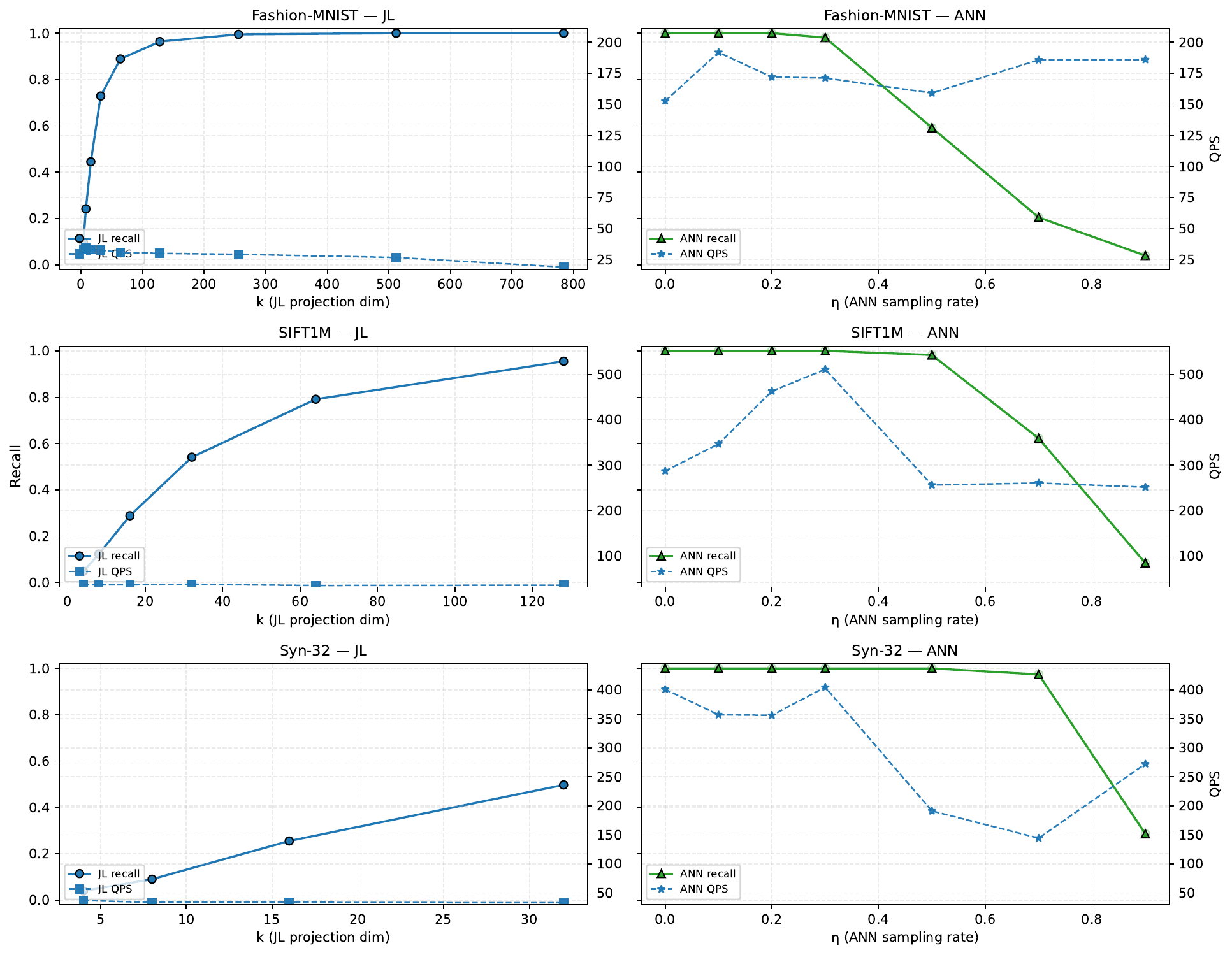}
\caption{Query throughput (\textsf{QPS}) and Recall variation for both \textsf{JL} (left) and \textsf{S-ANN} (right) across datasets. 
Each row corresponds to a dataset (\texttt{fashion-mnist}, \texttt{sift1m}, and \texttt{syn-32}), showing Recall (solid) and Queries-per-Second (dashed, right axis) as parameters $k$ and $\eta$ vary.}
\label{fig:qps}
\end{figure}

\subsection{Experiments for \textsf{A-KDE}} 
We investigate the impact of various parameters, including sketch size and sliding window size, on the accuracy of our algorithm for different datasets. We also compare \textsf{SW-AKDE} with \textsc{RACE}\cite{coleman2020sub}.

\emph{Datasets.}
We conduct experiments on both synthetic and real-world datasets. For the synthetic dataset, we generate $10{,}000$ $200$-dimensional vectors sampled from $10$ different multivariate gaussian distributions. For every $1000$ datapoints, we use a specific gaussian distribution. We generate this data set 50 times at random and conduct \emph{Monte Carlo} simulations to study the performance of our algorithm. The real-world datasets used are: (a) News Headlines~\cite{Kulkarni2017MillionHeadlines} — It consists of news headlines published over eighteen years by a reputable news agency. We select approximately $80{,}000$ headlines and convert them into $384$-dimensional text embeddings using the sentence transformer \textbf{all-MiniLM-L6-v2}\footnote{\href{https://www.kaggle.com/code/masterchen09/all-minilm-l6-v2}{Kaggle link}}. (b) ROSIS Hyperspectral Data~\cite{Sowmya2019} — It contains hyperspectral images captured by the \emph{ROSIS} sensor. Each image pixel is treated as a high-dimensional data vector (of dimension $103$ in this case) corresponding to a frequency spectrum. We use around $42{,}000$ data.

\emph{Implementation.} \textsf{SW-AKDE} is implemented using two different \textsc{LSH} kernels:
(a) Angular \textsc{LSH}, and
(b) $p$-stable Euclidean \textsc{LSH}. To bound the range of the $p$-stable \textsc{LSH} functions, we employ rehashing. We have used Python to implement all the algorithms. The bandwidth parameter, denoted by $p$ in Algorithm~\ref{KDE_algo}, was set to $1$ for all experiments. All results reported here correspond to the single-query setting. For the \textsf{SW-AKDE}, the relative error of the \textsc{EH} was taken as $0.1$. Hence, the upper bound on the theoretical relative error for the \textsc{A-KDE} estimate is 0.21 (follows from Lemma \ref{kdeestimate}). It is observed that the experimental errors are much less than this theoretical value, even for a small number of rows of the sketch.

\emph{Baseline.} We have used the \textsc{RACE} algorithm proposed in \cite{coleman2020sub} for comparison. This algorithm works for general streaming settings, whereas our algorithm works in a sliding window model.

\emph{Experimental Setup.} We have performed the following experiments to show the effect of various parameters on the performance of our algorithm.
\begin{enumerate}
    \item \textbf{Sketch Size vs. Error:} We show the variation of the log of the mean relative error with sketch size(Fig.~\ref{fig:exp:main}). We fixed the \textsc{EH} relative error as $\epsilon=0.1$ and the window size as 450. We ran the experiment for $10{,}000$ streaming data and measured the mean error on $1000$ query data. We did separate experiments on both real-world and synthetic datasets using Euclidean hash and angular hash. We varied the row size exponentially as: $100,200,400,800,1600,3200$.    

    \item \textbf{Window Size Effect:} We demonstrate the effect of varying the sliding window size on the mean relative error for the real-world datasets(Fig.~\ref{fig:kde_exp}). We have taken the window size as: $64,128,256,512,1024,2048$. For each window size, we have plotted the log of the mean relative error against the number of rows. The row sizes were varied similarly to those of Experiment 1. 

    \item \textbf{Comparison with RACE:} We compared \textsf{SW-AKDE} algorithm using angular hash and a window size of $260$ with \textsc{RACE} (Fig.~\ref{fig:kdecmp}) for both the real-world and synthetic datasets. We have plotted the log of the mean relative error against the row size, where row size has been taken similarly to Experiment 1.

\end{enumerate}
\paragraph{Discussion:} 

For \textsf{SW-AKDE}, the mean error reduces with an increase in sketch size for Euclidean kernels(Fig.~\ref{fig:exp:main1}). In case of angular hash, the mean error decreases in general for text data but is erratic for image data(Fig.~\ref{fig:exp:main2}). For synthetic data, both kernels show a regular decrease with increasing sketch size(Figs.~\ref{fig:exp:main3},~\ref{fig:exp:main4}). It is observed that higher window sizes minimize error for text data(Fig.~\ref{fig:kde_exp1}) while for image data the mean relative error is minimized for window size=$256$(Fig.~\ref{fig:kde_exp2}). When compared to \textsc{RACE}, our algorithm gives similar performance(Fig.~\ref{fig:kdecmp}) for both real-world and synthetic datasets. These results validate our theoretical guarantees and demonstrate practical effectiveness across tasks.
\begin{figure}[ht]
\begin{subfigure}[b]{.5\textwidth}
  \centering
  \includegraphics[trim={1cm 0cm 1.5cm 0cm},clip,width=.95\linewidth]{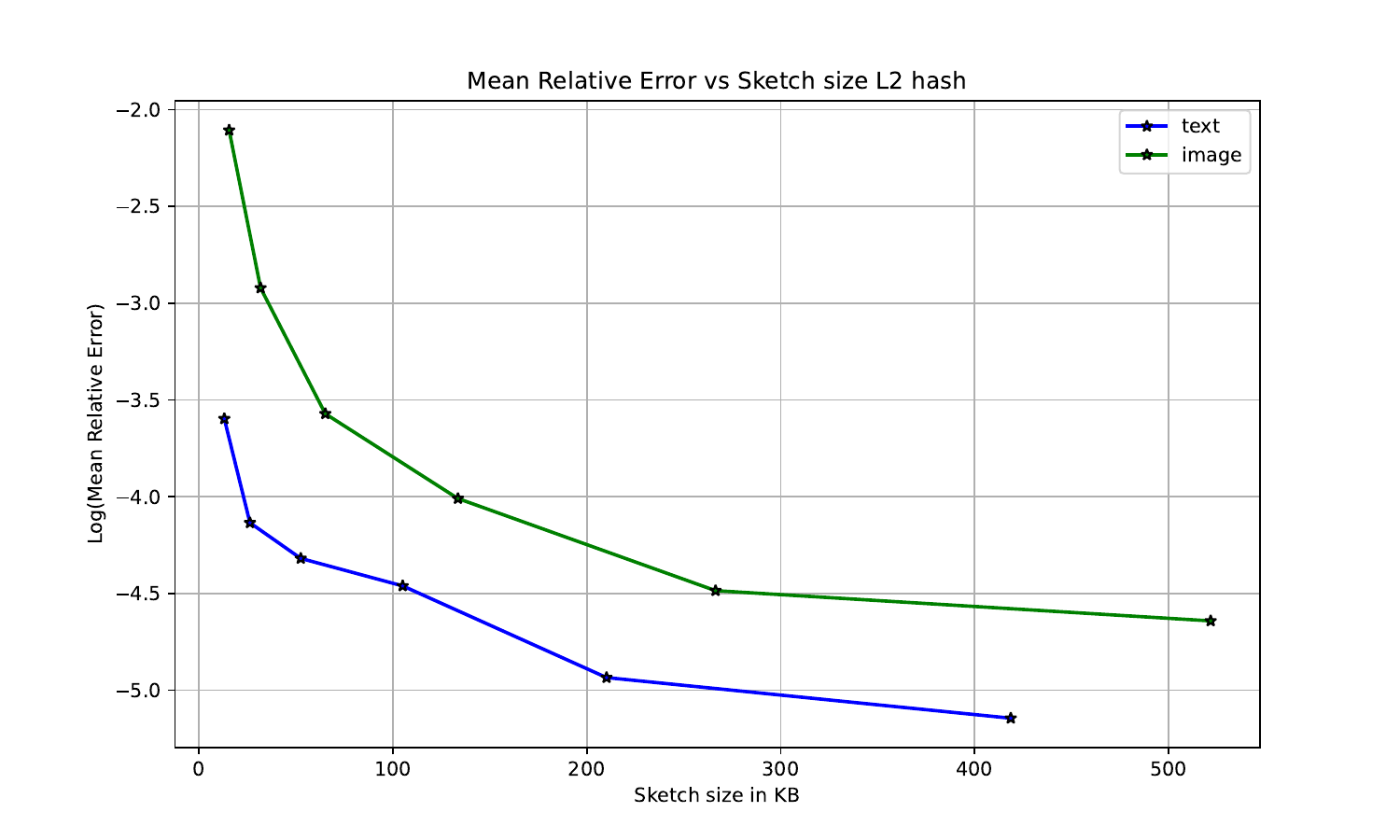}
  \caption{}
  \label{fig:exp:main1}
\end{subfigure}%
\hspace{0.02\textwidth}
\begin{subfigure}[b]{.5\textwidth}
  \centering
  \includegraphics[trim={1cm 0cm 1.5cm 0cm},clip,width=.95\linewidth]{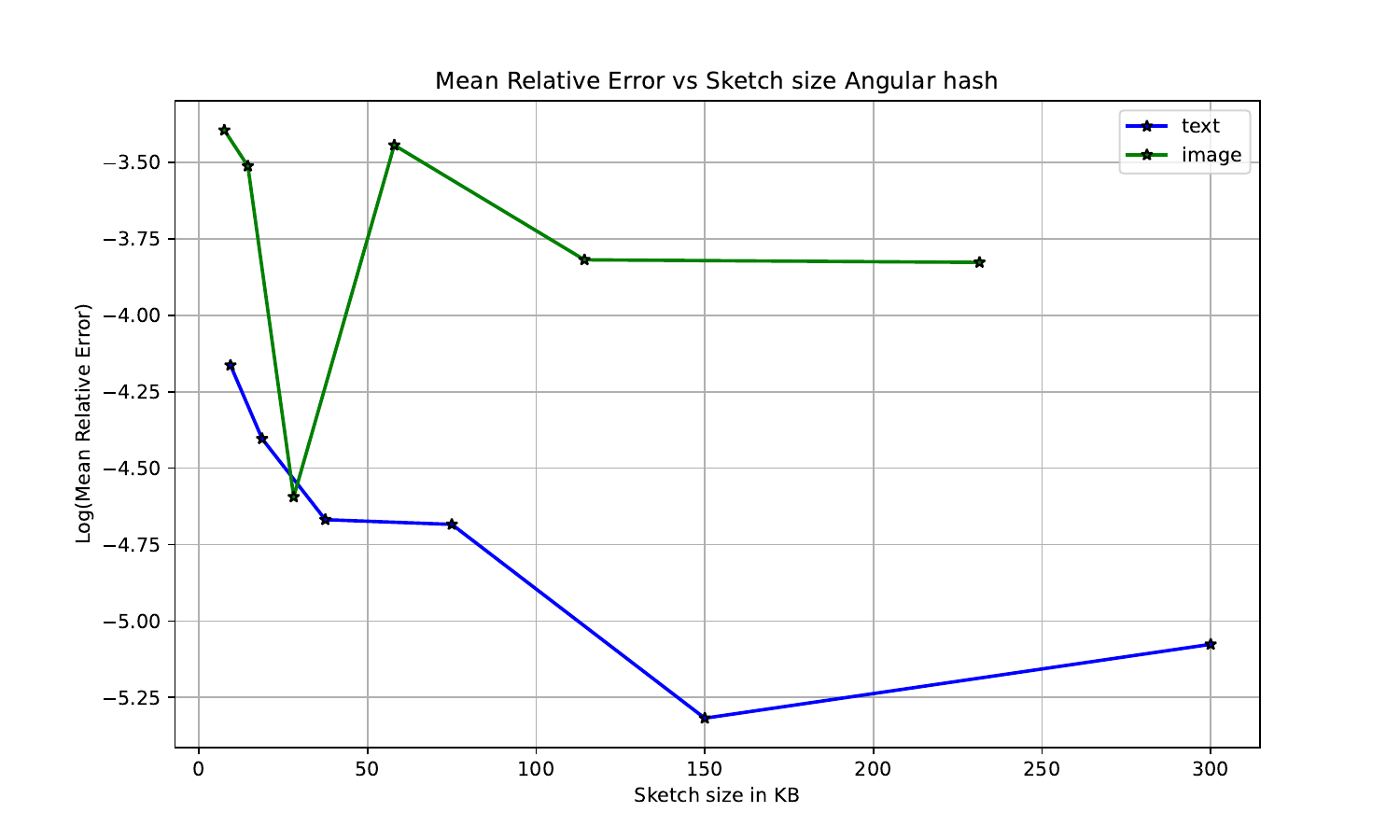}
  \caption{}
  \label{fig:exp:main2}
\end{subfigure}
\begin{subfigure}[b]{.5\textwidth}
  \centering
  \includegraphics[trim={1cm 0cm 1.5cm 0cm},clip,width=.95\linewidth]{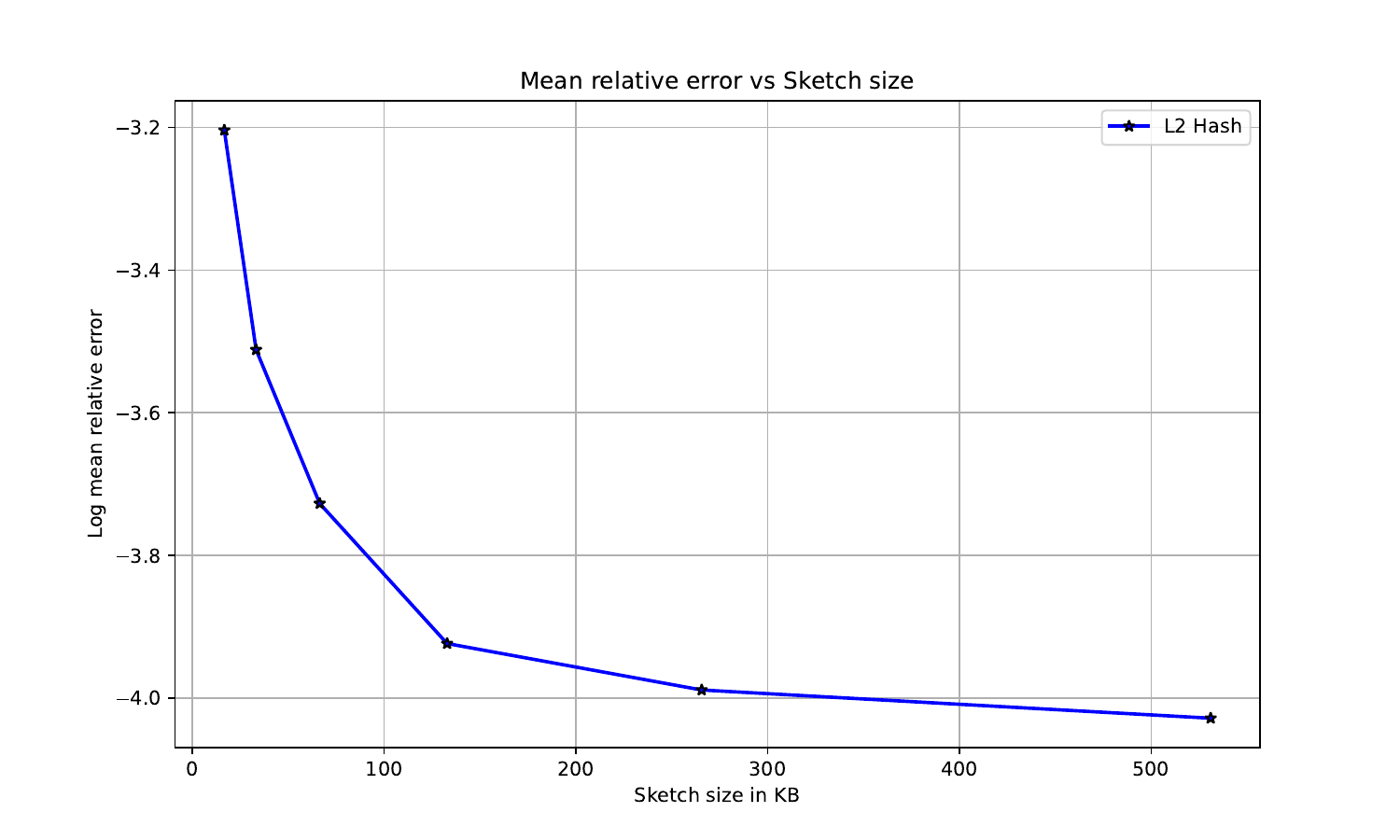}
  \caption{}
  \label{fig:exp:main4}
\end{subfigure}
\hspace{0.02\textwidth}
\begin{subfigure}[b]{.5\textwidth}
  \centering
  \includegraphics[trim={1cm 0cm 1.5cm 0cm},clip,width=.95\linewidth]{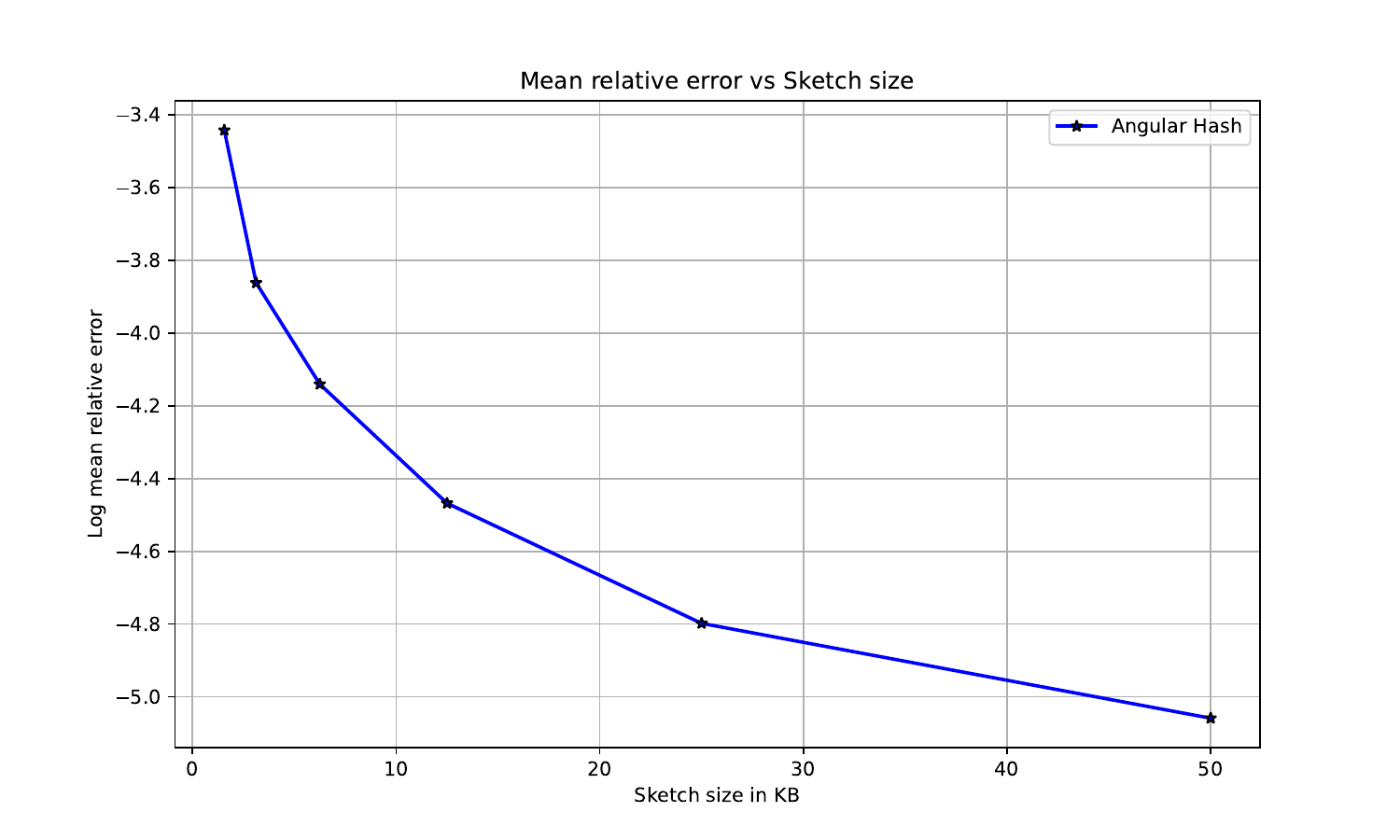}
  \caption{}
  \label{fig:exp:main3}
\end{subfigure}%

\caption{Effect of sketch size on \textsf{KDE} estimates (a) Real-world data with $p$-stable hash (b) Real-world data with Angular hash (c) Synthetic data with $p$-stable hash (d) Synthetic data with Angular hash}
\label{fig:exp:main}
\end{figure}

\begin{figure}[H]
\centering
\begin{subfigure}[b]{.5\textwidth}
  \centering
  \includegraphics[width=.95\linewidth]{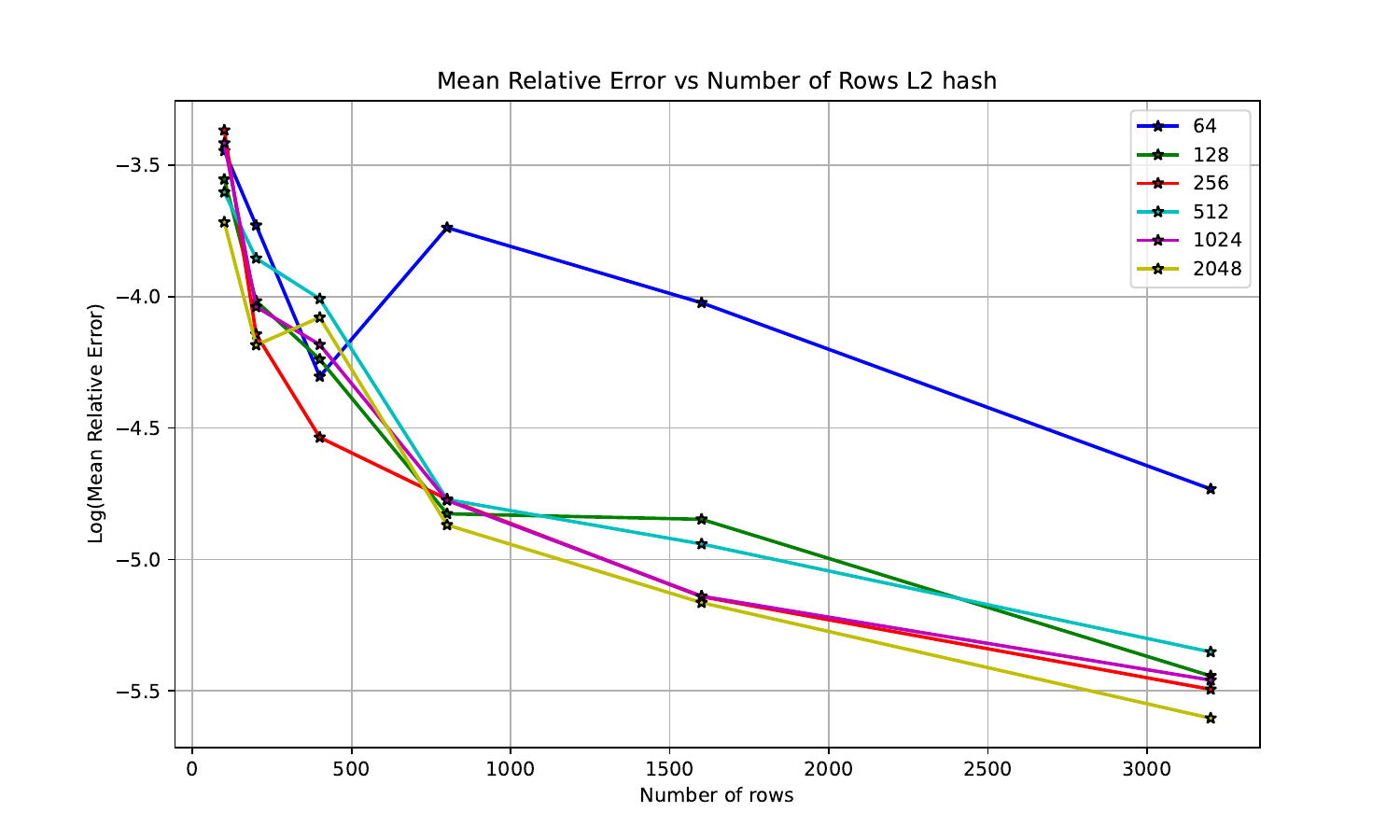}
  \caption{}
  \label{fig:kde_exp1}
\end{subfigure}%
\begin{subfigure}[b]{.5\textwidth}
  \centering
  \includegraphics[width=.95\linewidth]{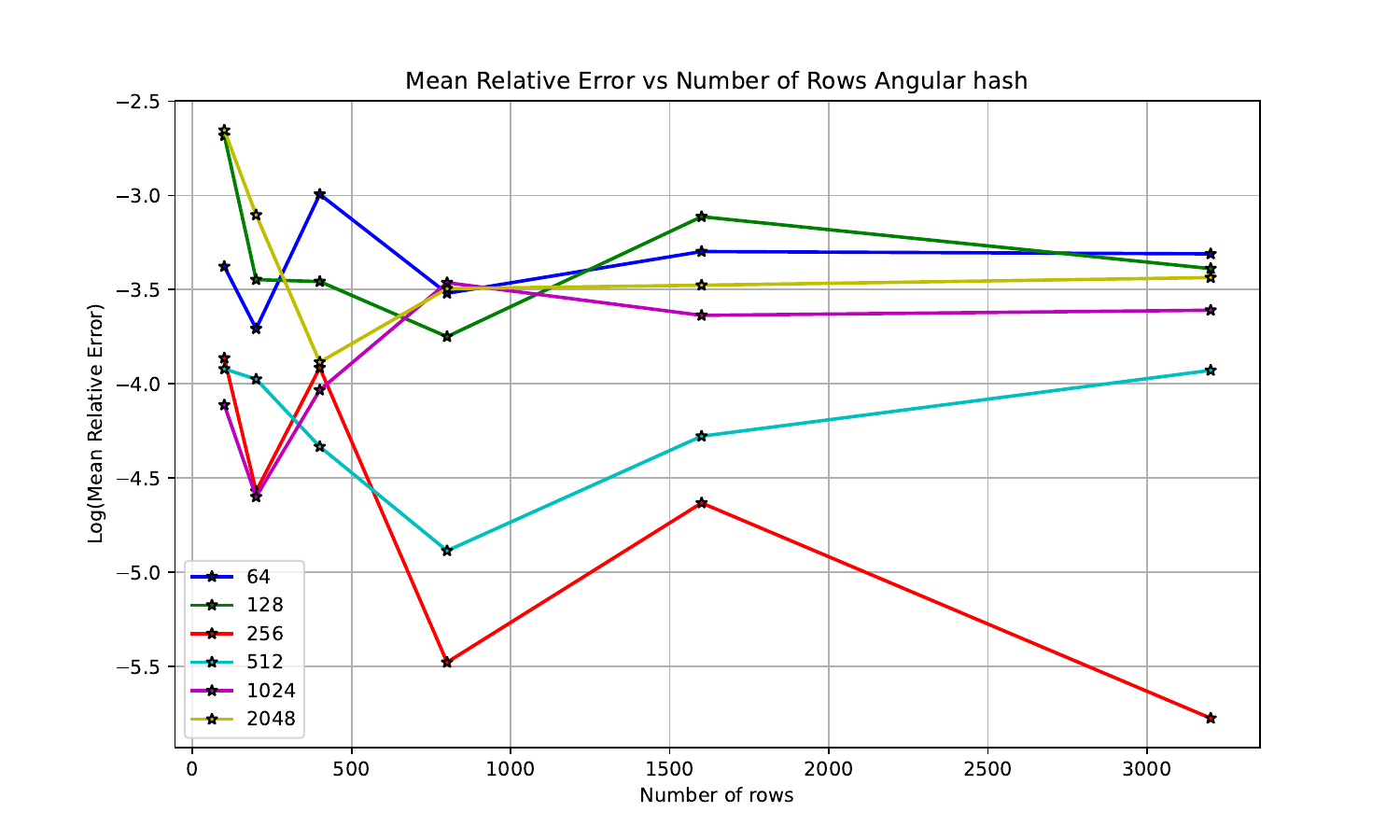}
  \caption{}
  \label{fig:kde_exp2}
\end{subfigure}
\caption{Effect of the window size on mean relative error for \textsf{SW-AKDE} with (a) Euclidean hash on news headlines data (b) angular hash on ROSIS image data.}
\label{fig:kde_exp}
\end{figure}

\begin{figure}[H]
\centering
\begin{subfigure}{.3\textwidth}
  \centering
  \includegraphics[trim={1cm 0cm 1cm 0cm},clip,width=.95\linewidth]{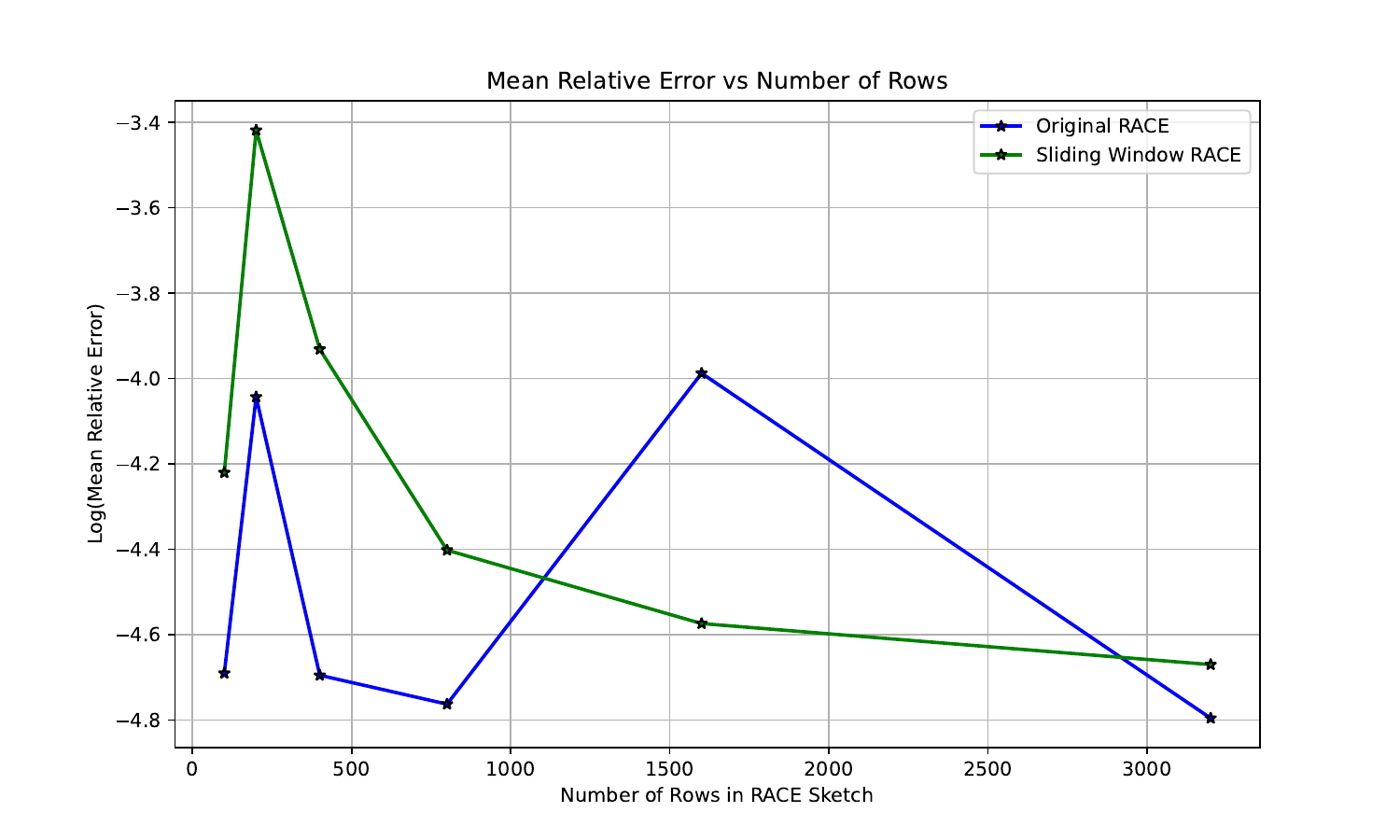}
  \caption{}
  \label{fig:kdecmp1}
\end{subfigure}%
\begin{subfigure}{.3\textwidth}
  \centering
  \includegraphics[trim={1cm 0cm 1cm 0cm},clip,width=.95\linewidth]{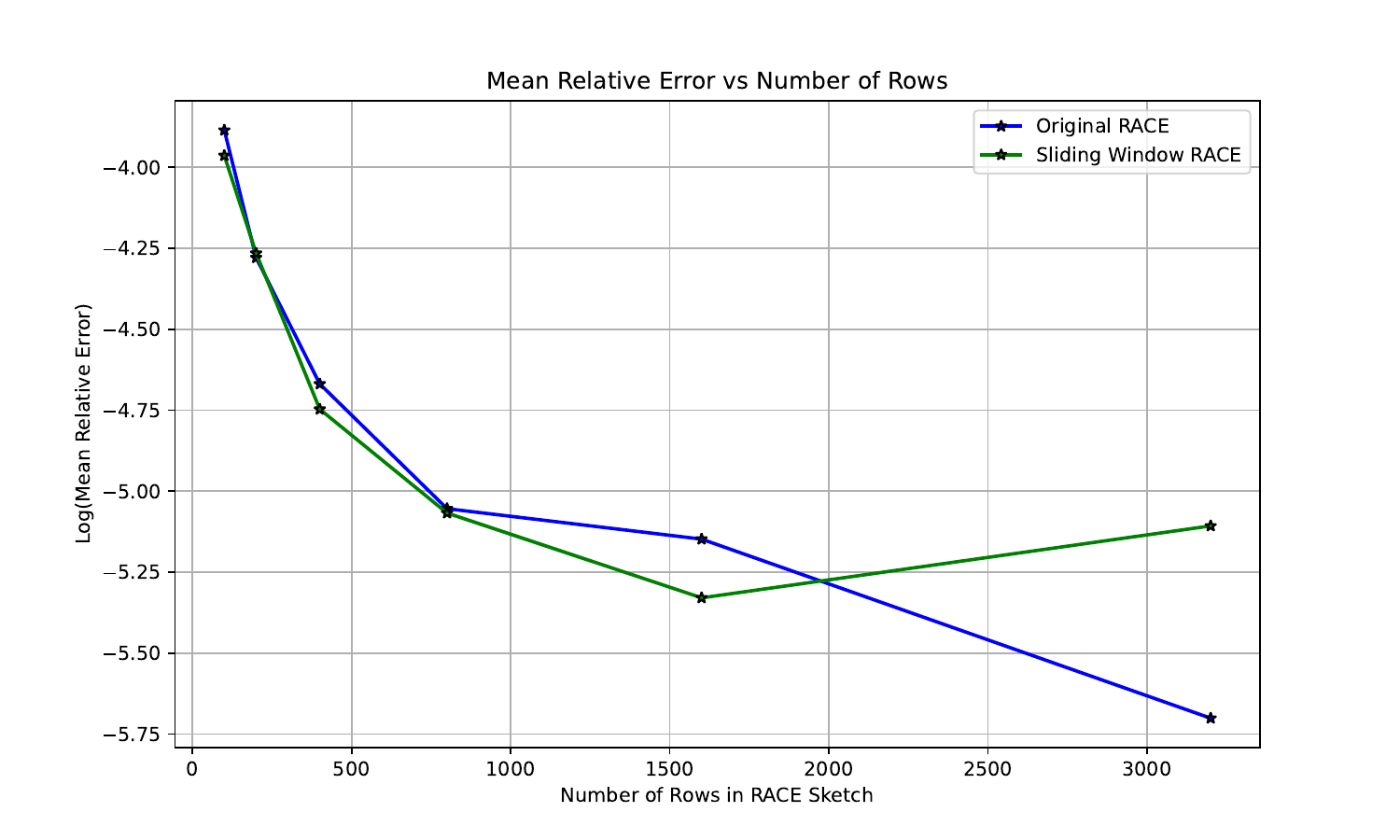}
  \caption{}
  \label{fig:kdecmp2}
\end{subfigure}
\begin{subfigure}{.3\textwidth}
  \centering
  \includegraphics[trim={1cm 0cm 1cm 0cm},clip,width=.95\linewidth]{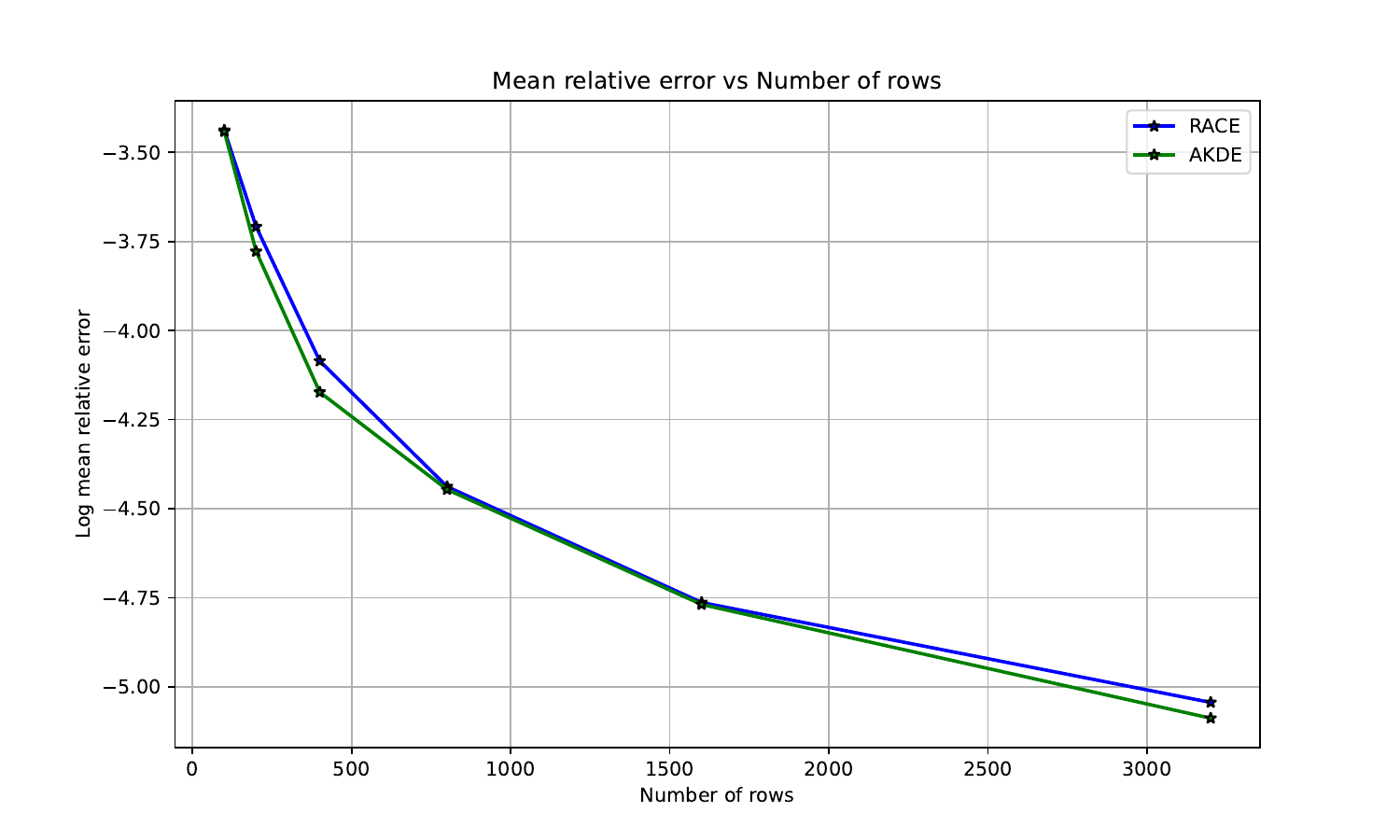}
  \caption{}
  \label{fig:kdecmp3}
\end{subfigure}
\caption{Comparison between \textsc{RACE} structure with Angular Hash and \textsf{SW-AKDE} with Angular Hash (a) \texttt{ROSIS Hyperspectral data}\hspace{5px} (b) \texttt{News headlines} (c) Synthetic data}
\label{fig:kdecmp}
\end{figure}

\section{Conclusion}

We presented new sketching algorithms for two fundamental streaming problems: Approximate Nearest Neighbor (\textsf{ANN}) search and Approximate Kernel Density Estimation (\textsf{A-KDE}). For \textsf{ANN}, we showed that under Poisson-distributed inputs, storing only a sublinear fraction of the stream suffices to preserve $(c,r)$-\textsf{ANN} guarantees, yielding simple, turnstile-robust sketches that support both single and batch queries and outperform the Johnson--Lindenstrauss baseline in accuracy and throughput. 
For \textsf{A-KDE}, we delicately integrated \textsc{RACE} with \textsc{EH} to design the first sketch that works in the general sliding-window model, providing provable error guarantees and empirical performance comparable to \textsc{RACE} while efficiently handling data expiration.

Several open directions arise from this work. For \textsf{ANN}, it would be valuable to understand how the guarantees extend beyond the Poisson model to other distributional settings, and to characterize finer trade-offs between space, time, and approximation quality. Another promising direction is to explore data-aware variants of \textsc{LSH}~\cite{AndoniR15, AndoniNNRW18} that adapt sketch parameters to the underlying distribution. 
For \textsf{A-KDE}, our experiments (Fig.~\ref{fig:kde_exp}) show that the sliding-window size significantly influences performance, raising the natural question of how to select this parameter optimally—potentially as a function of the relative error of the \textsc{EH}~\cite{datar2002maintaining}, the sketch width, or the observed data dynamics. Developing adaptive mechanisms for adjusting the window size based on the evolving data distribution remains an intriguing direction for future work.


\section{Acknowledgment}
The third author acknowledges David Woodruff for helpful pointers and comments regarding streaming models. 

\newcommand{\etalchar}[1]{$^{#1}$}

\end{document}